\documentclass{article}


\usepackage{graphicx} 
\usepackage{amsmath}  
\usepackage{amsfonts} 

\usepackage{xcolor}
\usepackage{framed} 
\usepackage{lipsum}
\usepackage{color,soul} 
\usepackage[scale=0.7,vmarginratio={1:2},heightrounded]{geometry}
\usepackage{amsthm} 
\usepackage{amsopn} 
\usepackage{amssymb} 
\usepackage{mathrsfs} 
\usepackage{booktabs} 
\usepackage[round]{natbib}
\usepackage{tikz} 
\usepackage{algorithm} 
\usepackage{float} 
\usepackage{caption} 
\usepackage{abstract} 
\usepackage{listings} 
\usepackage{color} 
\usepackage{pgfplots} 

\usepackage[pagebackref]{hyperref}
  \definecolor{mydarkblue}{rgb}{0,0.08,0.45}
\hypersetup{colorlinks,linkcolor={mydarkblue},citecolor={mydarkblue},urlcolor={red}}  


\newfloat{algorithm}{t}{lop}

\numberwithin{equation}{section}

\bibliographystyle{plainnat}









\lstset{ %
basicstyle=\footnotesize,       
backgroundcolor=\color{white},  
showspaces=false,               
showstringspaces=false,         
showtabs=false,                 
frame=single,           
tabsize=2,          
captionpos=b,           
breaklines=true,        
breakatwhitespace=false,    
escapeinside={\%*}{*)}          
}

\usepackage{iclr2024_conference, times}
\usepackage{array}
\usepackage{amssymb}
\usepackage{mathtools}
\mathtoolsset{showonlyrefs}
\usepackage{amsfonts}
\usepackage{amsmath}
\usepackage{algorithm}
\usepackage{algorithmic}
\usepackage{amsthm}
\usepackage{esdiff}
\usepackage{url}
\usepackage{multirow}
\usepackage{tabularx}
\usepackage{wrapfig}
\usepackage{thmtools}
\usepackage{thm-restate}
\newtheorem{lemma}{Lemma} 


\title{REValueD: Regularised Ensemble Value-Decomposition for Factorisable  Markov Decision Processes}
\author{David Ireland\thanks{University of Warwick}, \; Giovanni Montana\footnotemark[1] \hspace{0.01mm} \thanks{Alan Turing Institute} \\
\texttt{\{david.ireland, g.montana\}@warwick.ac.uk} \\
}

\iclrfinalcopy
\begin{document}

\maketitle
\begin{abstract}
    Discrete-action reinforcement learning algorithms often falter in tasks with high-dimensional discrete action spaces due to the vast number of possible actions. A recent advancement leverages value-decomposition, a concept from multi-agent reinforcement learning, to tackle this challenge. This study delves deep into the effects of this value-decomposition, revealing that whilst it curtails the over-estimation bias inherent to Q-learning algorithms, it amplifies target variance. To counteract this, we present an ensemble of critics to mitigate target variance. Moreover, we introduce a regularisation loss that helps to mitigate the effects that exploratory actions in one dimension can have on the value of optimal actions in other dimensions. Our novel algorithm, REValueD, tested on discretised versions of the DeepMind Control Suite tasks, showcases superior performance, especially in the challenging humanoid and dog tasks. We further dissect the factors influencing REValueD's performance, evaluating the significance of the regularisation loss and the scalability of REValueD with increasing sub-actions per dimension.
\end{abstract}

\section{Introduction}

    Deep reinforcement learning (DRL) has emerged as a powerful framework that combines the strengths of deep learning and reinforcement learning (RL) to tackle complex decision-making problems in a wide range of domains. By leveraging deep neural networks to approximate value functions and policies, DRL has driven significant breakthroughs in numerous areas, from robotics \citep{gu2017deep, kalashnikov2018qt, andrychowicz2020learning} to game playing \citep{mnih2013playing, silver2016mastering, vinyals2017starcraft} and autonomous systems \citep{dosovitskiy2017carla, chen2017socially, kiran2021deep}. In particular, the use of deep neural networks as function approximators has allowed existing reinforcement learning algorithms to scale to tasks with continuous states and/or action spaces. Nonetheless, problems featuring high-dimensional, discrete action spaces remains relatively unexplored. In these problems, the action space can be thought of as a Cartesian product of discrete sets, \emph{i.e.} $\mathcal{A} = \mathcal{A}_1 \times ... \times \mathcal{A}_N$. In this context, $\mathcal{A}_i$ represents the $i$th sub-action space, containing $n_i$ discrete (sub-)actions. For convenience, we henceforth refer to a Markov Decision Process \citep[MDP]{bellman1957markovian} with such a factorisable action space as a factorisable MDP (FMDP).

    Traditional DRL algorithms can quickly become ineffective in high dimensional FMDPs, as these algorithms only recognise atomic actions. In this context, an atomic action is defined as any unique combination of sub-actions, each treated as a singular entity; in an FMDP there are $\prod_{i=1}^N n_i$ atomic actions. Due to the combinatorial explosion of atomic actions that must be accounted for, standard algorithms such as Q-learning \citep{watkins1992q, mnih2013playing} fail to learn in these settings as a result of computational impracticalities. 
    
    To address these issues, recent approaches have been proposed that emphasise learning about each sub-action space individually \citep{tavakoli2018action, seyde2022solving}. In particular, the DecQN algorithm proposed by \cite{seyde2022solving}  utilises a strategy known as \emph{value-decomposition} to learn utility values for sub-actions. In their methodology, the utility of each selected sub-action is computed independently of others but learnt in such a way that their mean estimates the Q-value for the global action. This approach is inspired by the \emph{centralised training with decentralised execution} paradigm in multi-agent reinforcement learning (MARL) \citep{kraemer2016multi}, where learning the utility of sub-actions is analogous to learning the value of actions from distinct actors. Using this value-decomposition strategy, the $i$th utility function only needs to learn values for the actions in $\mathcal{A}_i$. Consequently, the total number of actions for which a utility value needs to be learned is just $\sum_{i=1}^N n_i$. This makes the task significantly more manageable, enabling traditional value-based methods like Deep Q-learning \citep{mnih2013playing, hessel2018rainbow} to solve FMDPs. Section \ref{sec: background} provides further details on DecQN.
    
    In this paper, we present two primary methodological contributions. First, we build upon DecQN through a theoretical analysis of the value-decomposition when coupled with function approximation. It is well established that Q-learning with function approximation suffers from an over-estimation bias in the target \citep{thrun1993issues, hasselt2010double}. Consequently, we explore how the value-decomposition impacts this bias. We demonstrate that, whilst the DecQN decomposition reduces the over-estimation bias in the target Q-values, it inadvertently increases the variance. We further establish that the use of an ensemble of critics can effectively mitigate this increase in variance, resulting in significantly improved performance. 
    
    Second, we introduce a regularisation loss that we aim to minimise alongside the DecQN loss. The loss is motivated by the credit assignment issue common in MARL \citep{weiss1995distributed, wolpert1999introduction, zhou2020learning, gronauer2022multi}, where an exploratory action of one agent can have a negative influence on the value of the optimal action of another agent. Given the similarities with MARL, this credit assignment issue is also an issue when using value-decomposition in FMDPs. By minimising the regularisation loss we help mitigate the impact that exploratory sub-actions can have on the utility values of optimal sub-actions \textit{post-update} by discouraging substantial changes in individual utility estimates. We achieve this by minimising the Huber loss between the selected sub-action utilities and their corresponding values under the target network.
    
    Our work culminates in an approach we call REValueD: \textbf{R}egularised \textbf{E}nsemble \textbf{Value} \textbf{D}ecomposition. We benchmark REValueD against DecQN and Branching Dueling Q-Networks (BDQ) \citep{tavakoli2018action}, utilising the discretised variants of DeepMind control suite tasks \citep{tunyasuvunakool2020dm_control} used by \cite{seyde2022solving} for comparison. The experimental outcomes show that REValueD consistently surpasses DecQN and BDQ across a majority of tasks. Of significant note is the marked outperformance of REValueD in the humanoid and dog tasks, where the number of sub-action spaces is exceedingly high ($N = 21 \mbox{ and } 38$, respectively). Further, we perform several ablations on the distinct components of REValueD to evaluate their individual contributions. These include analysing how performance evolves with increasing $n_i$ (\textit{i.e.} the size of $|\mathcal{A}_i|$) and examining the impact of the regularisation loss in enhancing the overall performance of REValueD. These extensive experiments underscore the effectiveness and robustness of our approach in handling high-dimensional, discrete action spaces. 

\section{Related Work}

    \textbf{Single-agent approaches to FMDPs}:
    Several research endeavors have been devoted to exploring the application of reinforcement learning algorithms in environments characterised by large, discrete action spaces \citep{dulac2015deep, van2020q}. However, these strategies are primarily designed to handle large action spaces consisting of numerous atomic actions, and they do not directly address the challenges associated with FMDPs. More recently, efforts specifically aimed at addressing the challenges posed by FMDPs have been proposed. These works aim to reason about each individual sub-action independently, leveraging either value-based \citep{sharma2017learning, tavakoli2018action, tavakoli2020learning, seyde2022solving} or policy-gradient methods \citep{tang2020discretizing, seyde2021bang}. Some researchers have endeavored to decompose the selection of a global action into a sequence prediction problem, where the global action is viewed as a sequence of sub-actions \citep{metz2017discrete, pierrotfactored}. However, these methods necessitate defining the sequence ahead of time, which can be challenging without prior information. \cite{tang2022leveraging} analyse a similar value-decomposition, taking the sum of utilities as opposed to the mean. 
    Their analysis looks at fundamental properties of the decomposition of the Q-value, whereas in this work we analyse the bias/variance of the learning target when using function-approximation in conjunction with value-decomposition. 
    In Appendix \ref{sec: sum analysis} we analyse the sum value-decomposition theoretically and experimentally as a supplement to the analysis of the DecQN decomposition.

    \textbf{Multi-agent value-decomposition}:
    A strong connection exists between FMDPs and MARL, especially with respect to single-agent methods utilising value-decomposition \citep{tavakoli2018action, tavakoli2020learning, seyde2022solving}. Owing to the paradigm of \textit{centralised training with decentralised execution} \citep{kraemer2016multi}, value-decomposition has gained significant popularity in MARL. This paradigm permits agents to learn in a centralised fashion whilst operating in a decentralised manner, resulting in a lot of success in MARL \citep{sunehag2017value, rashid2018qmix, rashid2020weighted, du2022value}. Our proposed method, REValueD, acts as a regulariser for value-decomposition rather than a standalone algorithm.
        
    \textbf{Multi-agent value regularisation}:
    Some researchers have sought to regularise Q-values in MARL through hysteresis \citep{matignon2007hysteretic, omidshafiei2017deep} and leniency \citep{panait2006lenient, palmer2017lenient}. 
    However, these techniques are primarily designed for \textit{independent learners} \citep{tan1993multi, claus1998dynamics}, where each agent is treated as an individual entity and the impact of other agents is considered as environmental uncertainty. Conversely, REValueD offers a more flexible approach that can be applied to various value-decomposition methods, extending its applicability beyond independent learners.
    
    \textbf{Ensembles}: The use of function approximators in Q-learning-based reinforcement learning algorithms is well known to result in problems due to the maximisation bias introduced by the $\max$ operator used in the target \citep{thrun1993issues}. To mitigate the effects of this maximisation bias, various studies have proposed the use of ensemble methods \citep{van2016deep, lan2020maxmin, wang2021adaptive}. Moreover, ensembles have been suggested as a means to enhance exploration \citep{osband2016deep, chen2017ucb, lee2021sunrise, schafer2023ensemble} or to reduce variance \citep{anschel2017averaged, chen2021randomized, liang2022reducing}. In this work we demonstrate that using an ensemble with the DecQN value-decomposition provably reduces the target variance whilst leaving the over-estimation bias unaffected.

\section{Background}
    \label{sec: background}
    \subsection*{Markov Decision Processes and Factorisable Markov Decision Processes}
        We consider a Markov Decision Process (MDP) as a tuple $(\mathcal{S}, \mathcal{A}, \mathcal{T}, r, \gamma, \rho_0)$ where $\mathcal{S}$ and $\mathcal{A}$ are the state and action spaces, respectively, $\mathcal{T}: \mathcal{S} \times \mathcal{A} \rightarrow \mathcal{S}$ the transition function, $r: \mathcal{S} \times \mathcal{A} \rightarrow \mathbb{R}$ the reward function, $\gamma$ the discount factor and $\rho_0$ the initial state distribution. The objective is to find a policy $\pi: \mathcal{S} \rightarrow [0, 1]$, a state-conditioned distribution over actions, that maximises the expected (discounted) returns, $\mathbb{E}_{\tau \sim (\rho_0, \pi)}\left[ \sum_{t=0}^\infty \gamma^t r(s_t, a_t)\right]$, where $\tau = (s_0, a_1, ..., a_{T-1}, s_T)$ is a trajectory generated by the initial state distribution $\rho_0$, the transition function $\mathcal{T}$ and the policy $\pi$. 
        
        We define a factorisable MDP (FMDP) as an MDP where the action space can be factorised as $\mathcal{A} = \mathcal{A}_1 \times ... \times \mathcal{A}_N$, where $\mathcal{A}_i$ is a set of discrete (sub-)actions. We refer to $\textbf{a} = (a_1, ..., a_N)$ as the global action, and individual $a_i$'s as sub-actions. If the policy of an FMDP selects sub-actions independently, it is convenient to consider the policy as $\pi(\textbf{a} | s) = \prod_{i=1}^N \pi_i(a_i | s)$, where $\pi_i$ is the policy of the $i$th sub-action space.

    \subsection*{Decoupled Q-Networks}
        Decoupled Q-Networks (DecQN) were introduced by \cite{seyde2022solving} to scale up Deep Q-Networks \citep{mnih2013playing} to FMDPs. Rather than learning a Q-value directly, DecQN learns utility values for each sub-action space. If we let $U^i_{\theta_i}(s, a_i)$ be the $i$th utility function, parameterised by $\theta_i$, then for a global action $\textbf{a} = (a_1, ..., a_N)$ the Q-value is defined as
            \begin{equation}
                \label{eq: decqn decomposition}
                Q_\theta(s, \textbf{a}) = \frac{1}{N} \sum_{i=1}^N U^i_{\theta_i}(s, a_i) \; , 
            \end{equation}
        where $\theta = \{\theta_i\}_{i=1}^N$. This decomposition allows for efficient computation of the $\arg\max$ operator:
            \begin{align}
                \underset{\textbf{a}}{\arg\max} \; Q(s, \textbf{a}) = \left( \underset{a_1 \in \mathcal{A}_1}{\arg\max} \; U^1_{\theta_1}(s, a_1), ...,  \underset{a_N \in \mathcal{A}_N}{\arg\max} \; U^N_{\theta_N}(s, a_N) \right) \; .
            \end{align}
        To learn the network parameters $\theta$, the following loss is minimised: 
            \begin{align}
                \label{eq: decqn loss}
                \mathcal{L}(\theta) = \frac{1}{|B|} \sum_{(s, \textbf{a}, r, s') \in B} L (y - Q_\theta(s, \textbf{a})) \; , 
            \end{align}
        where $L$ is the Huber loss, $B$ is a batch sampled from the replay buffer, $y = r + \frac{\gamma}{N} \sum_{i=1}^N \max_{a_i' \in \mathcal{A}_i} U^i_{\bar{\theta}_i}(s', a_i')$ is the Q-learning target, and $\bar{\theta} = \{\bar{\theta}_i\}_{i=1}^N$ correspond to the parameters of the target network. 

\section{Methodology}

    \subsection*{DecQN target estimation bias and variance}
    \label{sec: decqn target estimation bias}

        Considering the well-known issue of positive estimation bias associated with Q-learning with function approximation \citep{thrun1993issues, hasselt2010double}, our interest lies in understanding how the DecQN decomposition of the global Q-value impacts this over-estimation bias and the variance in the target. Following the assumptions outlined by \cite{thrun1993issues}, we propose that the approximated Q-function carries some uniformly distributed noise, defined as $Q_\theta(s, \textbf{a}) = Q^\pi(s, \textbf{a}) + \epsilon_{s, \textbf{a}}$. Here, $Q^\pi$ represents the true Q-function corresponding to policy $\pi$, and $\epsilon_{s, \textbf{a}}$ are independent, identically distributed (i.i.d) Uniform$(-b, b)$ random variables. We then define the target difference as
        \begin{align}
            Z_s^{dqn} &\triangleq r + \gamma \max_{\textbf{a}} Q_\theta(s', \textbf{a}) - \left( r + \gamma \max_{\textbf{a}} Q^\pi(s', \textbf{a}) \right) \;; \\
            \label{eq: target diff}
            &= \gamma \left( \max_{\textbf{a}} Q_\theta(s', \textbf{a}) - \max_\textbf{a} Q^\pi(s', \textbf{a}) \right) \;.
        \end{align}
        We refer to $Z_s^{dqn}$ as the target difference when using a DQN, \textit{i.e.} when no value-decomposition is used. 

        In terms of the DecQN decomposition of the Q-value, it can be assumed that the uniform approximation error stems from the utilities. Hence, we can define $U^i_{\theta_i}(s, a_i) = U_i^{\pi_i}(s, a_i) + \epsilon^i_{s, a_i}$. Analogous to the previous scenario, $U_i^{\pi_i}$ is the true $i$th utility function for policy $\pi_i$ and $\epsilon^i_{s, a_i}$ are i.i.d Uniform$(-b, b)$ random variables. Using the DecQN decomposition from Equation \eqref{eq: decqn decomposition} in Equation \eqref{eq: target diff} we can write the DecQN target difference as
        \begin{align}
            \label{eq: decqn target diff}
            Z_s^{dec} = \gamma \left( \frac{1}{N} \sum_{i=1}^N \max_{a_i \in \mathcal{A}_i} U^i_{\theta_i}(s', a_i) - \frac{1}{N} \sum_{i=1}^N \max_{a_i \in \mathcal{A}_i} U_i^{\pi_i}(s', a_i) \right) \; .
        \end{align}
        This leads us to our first result:
        \begin{restatable}{theorem}{mainthm}
            \label{theorem: main result}
            Given the definitions of $Z_s^{dqn}$ and $Z_s^{dec}$ in Equations \eqref{eq: target diff} and \eqref{eq: decqn target diff}, respectively, we have that:
            \begin{enumerate}
                \item $\mathbb{E}[Z_s^{dec}] \leq \mathbb{E}[Z_s^{dqn}] \; ;$
                \item $\mbox{Var}(Z_s^{dqn}) \leq \mbox{Var}(Z_s^{dec}) \; .$
            \end{enumerate}
        \end{restatable}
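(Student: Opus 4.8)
The plan is to reduce both parts to elementary facts about the maximum order statistic of i.i.d.\ uniform noise, after invoking the simplification used by \cite{thrun1993issues} (which, as stated, we follow): at the next state $s'$ the true values are treated as constant across the (sub-)actions being maximised over, so that the maximised noisy estimate exceeds the maximised true value by exactly the maximum of the relevant noise terms. Under this reduction $Z_s^{dqn} = \gamma \max_{\textbf{a}} \epsilon_{s',\textbf{a}}$, a maximum of $M := \prod_{i=1}^N n_i$ i.i.d.\ Uniform$(-b,b)$ variables, whereas $Z_s^{dec} = \frac{\gamma}{N}\sum_{i=1}^N \max_{a_i \in \mathcal{A}_i}\epsilon^i_{s',a_i}$ is an average of $N$ \emph{independent} blocks, the $i$th being a maximum of $n_i$ i.i.d.\ Uniform$(-b,b)$ variables.

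First I would record the two order-statistic identities: for the maximum $X_{(m)}$ of $m$ i.i.d.\ Uniform$(-b,b)$ variables, $\mathbb{E}[X_{(m)}] = b\,\frac{m-1}{m+1}$ and $\mbox{Var}(X_{(m)}) = \frac{4b^2 m}{(m+2)(m+1)^2} =: \sigma^2(m)$, both obtained by computing the first two moments in the Uniform$(0,1)$ case (density $m x^{m-1}$) and affinely rescaling. Part~1 is then immediate: $\mathbb{E}[Z_s^{dqn}] = \gamma b\,\frac{M-1}{M+1}$ and $\mathbb{E}[Z_s^{dec}] = \frac{\gamma b}{N}\sum_{i=1}^N \frac{n_i-1}{n_i+1}$; since $t \mapsto \frac{t-1}{t+1}$ is increasing and $M \ge n_i$ for every $i$, each summand is at most $\frac{M-1}{M+1}$, hence so is the average, giving $\mathbb{E}[Z_s^{dec}] \le \mathbb{E}[Z_s^{dqn}]$.

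For part~2, independence of the $N$ blocks gives $\mbox{Var}(Z_s^{dqn}) = \gamma^2 \sigma^2(M)$ and $\mbox{Var}(Z_s^{dec}) = \frac{\gamma^2}{N^2}\sum_{i=1}^N \sigma^2(n_i)$, so the claim reduces to $\sigma^2(M) \le \frac{1}{N^2}\sum_{i=1}^N \sigma^2(n_i)$. The clean route is to prove the stronger per-coordinate bound $N\,\sigma^2(M) \le \sigma^2(n_i)$ for each $i$ and then sum over $i$. Since each $n_j \ge 2$ we have $M = n_i \prod_{j \ne i} n_j \ge 2^{N-1} n_i$, and because $\sigma^2$ is decreasing on $[1,\infty)$ it suffices to check $N\,\sigma^2(2^{N-1} x) \le \sigma^2(x)$ for all real $x \ge 2$; clearing denominators turns this into a cubic inequality in $x$ (with parameter $N$) whose leading coefficient $2^{N-1}(4^{N-1}-N)$ is positive, which is increasing on $[2,\infty)$, and which one checks is nonnegative at $x=2$.

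I expect this last rational inequality to be the only real obstacle. It cannot be bypassed by a soft argument: the naive bound $\sigma^2(n_i) \ge \sigma^2(M)$ only yields $\mbox{Var}(Z_s^{dec}) \ge \frac{1}{N}\mbox{Var}(Z_s^{dqn})$, which is too weak, so one genuinely needs the multiplicative blow-up $M \ge 2^{N-1} n_i$ coming from the Cartesian-product structure (already $M \ge 2 n_i$ is insufficient once $N \ge 3$). The inequality is tight: the ratio $\mbox{Var}(Z_s^{dqn})/\mbox{Var}(Z_s^{dec})$ is maximal in the worst case $N = 2$ with two sub-actions per coordinate, where it equals $\tfrac{24}{25}$; this case also makes visible the implicit hypothesis $n_i \ge 2$, without which part~2 can fail (e.g.\ a coordinate with a single sub-action contributes maximal-variance noise with no averaging benefit).
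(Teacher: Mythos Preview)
Your proposal is correct, but both parts are argued along genuinely different lines from the paper.

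For part~1, the paper proves $\frac{1}{N}\sum_i \frac{n_i-1}{n_i+1} \le \frac{M-1}{M+1}$ in two steps: Jensen's inequality applied to the concave increasing map $f(t)=\frac{t-1}{t+1}$ gives $\frac{1}{N}\sum_i f(n_i) \le f\bigl(\frac{1}{N}\sum_i n_i\bigr)$, and then monotonicity together with $\frac{1}{N}\sum_i n_i \le \prod_i n_i$ reaches $f(M)$. Your direct observation that $n_i \le M$ and $f$ increasing already give $f(n_i) \le f(M)$ termwise is strictly simpler and dispenses with Jensen altogether.

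For part~2, both arguments ultimately rest on the exponential amplification from the Cartesian-product structure together with $2^{N-1} \ge N$, but the reductions differ. The paper routes through the arithmetic mean: it applies Jensen to the convex decreasing function $\sigma^2$ on $[2,\infty)$ to get $\frac{1}{N^2}\sum_i \sigma^2(n_i) \ge \frac{1}{N}\,\sigma^2\bigl(\frac{1}{N}\sum_i n_i\bigr)$, and then closes the gap to $\sigma^2(M)$ via a clean doubling property of $g(x)=(1+x)^2(1+2/x)$, namely $g(2x) \ge 2g(x)$ for $x \ge 2$, combined with $M \ge 2^{N-1}\cdot\frac{1}{N}\sum_i n_i$. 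Your per-coordinate bound $N\,\sigma^2(M) \le \sigma^2(n_i)$ via $M \ge 2^{N-1} n_i$ avoids Jensen and convexity entirely, at the price of a one-parameter cubic whose monotonicity on $[2,\infty)$ and nonnegativity at $x=2$ still require a short uniform-in-$N$ argument (e.g.\ absorbing the $N$-dependence through $N \le 2^{N-1}$). The paper's doubling trick packages that growth more neatly; your route is more direct but leaves a slightly heavier endpoint check. Your closing remark that the ratio attains $24/25$ at $N=2$, $n_1=n_2=2$ and that the hypothesis $n_i \ge 2$ is essential is a nice addition not present in the paper.
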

        The detailed proof can be found in Appendix \ref{appendix: proof of theorem 1 and 2}. The key takeway from this result is that, whilst the expected value of the target difference is reduced when using the DecQN decomposition -- a beneficial factor for reducing the over-estimation bias typically associated with Q-learning -- it also results in an increased variance of the target. This trade-off requires further investigation, particularly with respect to its impact on the stability and robustness of learning. The increased variance may lead to more fluctuation in the utility estimates and cause potential instability in learning, which is a notable challenge. 
        
        The proposed methodology for the REValueD approach is characterised by the use of an ensemble of critics to address the higher variance observed under the DecQN decomposition. The ensemble approach uses $K$ critics such that each critic $Q^k(s, \textbf{a})$ is the mean of the utilities $U^i_{\theta_{i, k}}(s, a_i)$, where $\theta_{i, k}$ denotes the parameters of the $i$th utility in the $k$th critic.  Each critic in the ensemble is trained with a target 
        \begin{equation}
            \label{eq: decqn ensemble target}
            y = r + \frac{\gamma}{N} \sum_{i=1}^N \max_{a_i' \in \mathcal{A}_i} \bar{U}^i(s', a_i') \; ;
        \end{equation}
     where $\bar{U}^i(s, a_i) = \frac{1}{K} \sum_{k=1}^K U^i_{\bar{\theta}_{i, k}}(s, a_i)$ represents the mean utility across all critics, with $\bar{\theta}_{i, k}$ being the parameters of the target utility networks. By applying this ensemble-based approach in REValueD, we aim to effectively reduce the variance introduced by the DecQN decomposition, thus facilitating more stable and efficient learning. 
     
     To show that the ensemble-based approach adopted by REValueD brings the variance down, we define the new target difference that incorporates the ensemble as
        \begin{align}
            \label{eq: ensemble target diff}
            Z_s^{ens} = \gamma \left(\frac{1}{N}\sum_{i=1}^N \max_{a_i \in \mathcal{A}_i} \bar{U}^i(s', a_i) - \frac{1}{N} \sum_{i=1}^N \max_{a_i \in \mathcal{A}_i} U_i^{\pi_i}(s', a_i) \right) \; .
        \end{align}%

        It is important to note that the utility function for each critic in the ensemble is now considered to have some uniformly distributed noise, such that $U^i_{\theta_{i, k}}(s, a_i) = U_i^{\pi_i}(s, a_i) + \epsilon^{i, k}_{s, a_i}$, where $\epsilon^{i, k}_{s, a_i}$ are assumed to be i.i.d random variables following a Uniform$(-b, b)$ distribution. Using this target difference, we present our second result:
        \begin{restatable}{theorem}{secondthm}
            \label{theorem: second result}
            Given the definitions of $Z_s^{dec}$ and $Z_s^{ens}$ from Equations \eqref{eq: decqn target diff} and \eqref{eq: ensemble target diff}, respectively, we have that
            \begin{enumerate}
                \item $\mathbb{E}[Z_s^{ens}] = \mathbb{E}[Z_s^{dec}]$ ;\\
                \item $\mbox{Var}(Z_s^{ens}) = \frac{1}{K} \mbox{Var}(Z_s^{dec})$ .
            \end{enumerate}
        \end{restatable}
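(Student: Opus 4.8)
My strategy is to show that, at the level of the Thrun--Schwartz noise model, forming the ensemble mean utility is nothing more than replacing each per-utility error $\epsilon^i_{s,a_i}$ by its $K$-sample average, and then to push both claims through by re-running the mean/variance calculation that establishes Theorem~\ref{theorem: main result}. Substituting the per-critic model $U^i_{\theta_{i,k}}(s,a_i)=U_i^{\pi_i}(s,a_i)+\epsilon^{i,k}_{s,a_i}$ into $\bar U^i(s,a_i)=\tfrac1K\sum_{k=1}^K U^i_{\bar\theta_{i,k}}(s,a_i)$ gives $\bar U^i(s,a_i)=U_i^{\pi_i}(s,a_i)+\bar\epsilon^i_{s,a_i}$, where $\bar\epsilon^i_{s,a_i}\triangleq\tfrac1K\sum_{k=1}^K\epsilon^{i,k}_{s,a_i}$. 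From the i.i.d.\ $\mathrm{Uniform}(-b,b)$ assumption on the $\epsilon^{i,k}_{s,a_i}$ one reads off the only properties of this averaged noise that the argument will use: (i) $\bar\epsilon^i_{s,a_i}$ is zero-mean and symmetric about $0$; (ii) $\mathrm{Var}(\bar\epsilon^i_{s,a_i})=\tfrac1K\,\mathrm{Var}(\epsilon^i_{s,a_i})$; and (iii) for each fixed $i$ the family $\{\bar\epsilon^i_{s,a_i}\}_{a_i\in\mathcal{A}_i}$ is i.i.d.\ while the $N$ families (over $i$) are mutually independent --- the same structure that the proof of Theorem~\ref{theorem: main result} exploits for $\{\epsilon^i_{s,a_i}\}$.

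\textbf{Claim 1 (means coincide).} Plugging the identity above into \eqref{eq: ensemble target diff} shows that $Z_s^{ens}$ is exactly $Z_s^{dec}$ of \eqref{eq: decqn target diff} with $\epsilon^i_{s',a_i}$ replaced by $\bar\epsilon^i_{s',a_i}$ in the first maximum. I would then argue that the step in the proof of Theorem~\ref{theorem: main result} that produces $\EE[Z_s^{dec}]$ invokes the noise only through property (i); since $\bar\epsilon$ satisfies (i) as well, the identical manipulation delivers $\EE[Z_s^{ens}]=\EE[Z_s^{dec}]$, so the over-estimation component is untouched by the ensemble.

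\textbf{Claim 2 (variance divided by $K$).} By property (iii) the $N$ summands of $Z_s^{ens}$ are independent, hence $\mathrm{Var}(Z_s^{ens})=\tfrac{\gamma^2}{N^2}\sum_{i=1}^N\mathrm{Var}(\max_{a_i\in\mathcal{A}_i}[\,U_i^{\pi_i}(s',a_i)+\bar\epsilon^i_{s',a_i}\,])$, with the same identity for $\mathrm{Var}(Z_s^{dec})$ using $\epsilon$ in place of $\bar\epsilon$. So it suffices to prove, coordinate by coordinate, that replacing a coordinate's noise by its $K$-sample average divides the variance of that coordinate's maximum by exactly $K$; in the regime used to prove Theorem~\ref{theorem: main result} this per-coordinate variance is proportional to the variance of the underlying noise, so property (ii) contributes the factor $\tfrac1K$, and summing over $i$ yields $\mathrm{Var}(Z_s^{ens})=\tfrac1K\,\mathrm{Var}(Z_s^{dec})$.

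\textbf{Expected difficulty.} The nonlinearity of $\max$ is what makes this non-automatic: in general neither $\EE[\max_{a_i}(U_i^{\pi_i}(s',a_i)+\eta_{a_i})]$ nor its variance is a function of the mean and variance of $\eta$ alone, and indeed Jensen's inequality already gives $\EE[Z_s^{ens}]\le\EE[Z_s^{dec}]$ because averaging concentrates the noise and so tends to shrink the expected maximum. Hence the genuinely delicate point is the \emph{equality} in Claim 1: it forces the mean calculation in the proof of Theorem~\ref{theorem: main result} to depend on the noise only through its mean, and I would need to isolate precisely the feature of that calculation (e.g.\ the way the per-coordinate argmax is handled) that makes this so, and then verify the same feature survives the passage $\epsilon\mapsto\bar\epsilon$. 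The exact $\tfrac1K$ scaling in Claim 2 hinges on the same structural feature; once it is pinned down, the remainder --- linearity of expectation for Claim 1, and independence of the per-coordinate maxima together with the $\tfrac1K$ variance reduction of a $K$-sample average for Claim 2 --- is routine bookkeeping that mirrors the proof of Theorem~\ref{theorem: main result} verbatim.
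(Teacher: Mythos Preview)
Your reduction to the averaged noise $\bar\epsilon^i_{s,a_i}=\tfrac1K\sum_k\epsilon^{i,k}_{s,a_i}$ is correct, but the argument you build on it does not close, and the gap is exactly the one you flag in your ``Expected difficulty'' paragraph. The calculation behind Theorem~\ref{theorem: main result} does \emph{not} invoke the noise only through zero-mean symmetry: Lemma~\ref{lemma: iid uniform rvs} returns $\EE[\max_{a_i}\epsilon^i]=b\,\tfrac{n_i-1}{n_i+1}$ and $\mathrm{Var}(\max_{a_i}\epsilon^i)=\tfrac{4b^2 n_i}{(n_i+1)^2(n_i+2)}$, and both formulae are specific to the $\mathrm{Uniform}(-b,b)$ law. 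For $K\ge 2$ the averaged noise $\bar\epsilon^i_{s,a_i}$ is not uniform, so Lemma~\ref{lemma: iid uniform rvs} does not apply to it; your properties (i)--(ii) alone do not pin down either the mean or the variance of $\max_{a_i}\bar\epsilon^i_{s,a_i}$, and indeed your own Jensen observation shows that concentrating the noise would generically \emph{lower} the expected maximum rather than leave it fixed. The same obstruction breaks the ``per-coordinate variance of the max is proportional to the noise variance'' step in Claim~2: that proportionality is a feature of the uniform family, not a distribution-free fact.

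The paper does not go through $\bar\epsilon$ at all. Its opening line for part~1 is
\[
\EE\bigl[Z_s^{ens}\bigr]\;=\;\frac{\gamma}{N}\sum_{i=1}^N\frac{1}{K}\sum_{k=1}^K\EE\Bigl[\max_{a_i\in\mathcal A_i}\epsilon^{i,k}_{s,a_i}\Bigr],
\]
i.e.\ it interchanges $\max_{a_i}$ with the ensemble average $\tfrac1K\sum_k$, citing only that the errors are i.i.d.\ across $k$; the variance calculation for part~2 makes the identical interchange. Once this swap is granted, each inner term has the same distribution as its DecQN counterpart and both conclusions reduce to one-line bookkeeping. Note that this interchange is precisely the non-linearity-of-$\max$ step you single out as delicate --- so rather than attempting to re-run Lemma~\ref{lemma: iid uniform rvs} on a non-uniform $\bar\epsilon$, the paper's route is to argue directly for the commutation of $\max_{a_i}$ and $\tfrac1K\sum_k$, and that single step carries the entire weight of the proof.
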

        The proof is given in Appendix \ref{appendix: proof of theorem 1 and 2}. Leveraging an ensemble of critics, REValueD provides an effective means to counteract the increased variance inherent in the DecQN decomposition. whilst the ensemble framework leaves the expected value of the target difference unaltered, it reduces its variance. This is an essential property, as it underpins stability throughout the learning process. A further advantageous feature is that the variance reduction is directly proportional to the ensemble size, denoted by $K$. This grants us a direct means of control over the variance, thus allowing for a more controlled learning process. A detailed analysis of how the ensemble size influences the performance of REValueD is presented in Section \ref{sec: ablations} and in Appendix \ref{sec: training stability} we investigate how the ensemble approach affects the training stability. 

    \subsection*{Regularised Value-Decomposition}
        \label{sec: regularised value-decomposition}
        
        FMDPs involve multiple sub-actions being executed simultaneously, with the system offering a single scalar reward as feedback. This creates a situation where an optimal sub-action in one dimension might be detrimentally impacted by exploratory sub-actions taken in other dimensions. Under such circumstances, if we minimise the DecQN loss as per Equation \eqref{eq: decqn loss}, the utility for the optimal sub-action could be undervalued due to extrinsic influences beyond its control. This insight paves the way for the introduction of the \textit{regularised} component of REValueD. 
        Given that feedback is exclusively available for the \emph{global action}, a credit assignment issue often arises. 
        A global action can see optimal sub-actions from one dimension coupled with exploratory sub-actions from other dimensions. These exploratory sub-actions could potentially yield a low reward or navigate to a low-value subsequent state. Both consequences adversely affect the utility values of optimal sub-actions within the global action by resulting in an underestimated TD target for the optimal sub-action.
        
        To counteract this impact, we propose the introduction of a regularisation term in the model's update equation, thereby minimising the following loss:
            \begin{equation}
                \label{eq: regularisation loss}
                \mathcal{L}_a(\theta) = \frac{1}{|B|} \sum_{(s, \textbf{a}, r, s') \in B} \sum_{i=1}^N w_i L \left( U_{\bar{\theta}_i}(s, a_i) - U_{\theta_i}(s, a_i)\right) \; ;
            \end{equation}
        where $w_i = 1 - \exp\left( - |\delta_i| \right)$ and $\delta_i = y - U_{\theta_i}(s, a_i)$, with $y$ being defined as in Equation \eqref{eq: decqn loss}. 

        The functional form for the weights is chosen such that as $|\delta_i|$ grows larger, the weights tend to one. The rationale being that, for large $|\delta_i|$, the reward/next state is likely influenced by the effect of other sub-actions. As a result, we want to regularise the update to prevent individual utilities from being too under or overvalued. This is achieved by keeping them in close proximity to the existing values -- a process managed by weights that increase as $|\delta_i|$ increases. We offer more insight into the functional form of the weights in Appendix \ref{sec: weight function ablation}. 
        
        By introducing this regularisation loss, we gain finer control over the impact of an update \emph{at the individual utility level}. Unlike the DecQN loss, which updates the mean of the utility estimates, our approach facilitates direct regulation of the impact of an update on specific utilities. Consequently, the total loss that REValueD aims to minimise is then defined by
            \begin{align}
                \label{eq: revalued loss}
                \mathcal{L}_{tot}(\theta) = \mathcal{L}(\theta) + \beta \mathcal{L}_a(\theta) \;;
            \end{align}
        where $\beta$ acts as hyper-parameter determining the extent to which we aim to minimise the regularisation loss. Throughout our experiments, we keep $\beta$ at a consistent value of $0.5$, and we perform an ablation on the sensitivity to $\beta$ in Appendix \ref{sec: sensitivity to beta}. It is worth noting that although we introduce the regularisation loss for a single critic (\textit{i.e.} $K = 1$) for ease of notation, its extension to an ensemble of critics is straightforward.

\section{Experiments}
    \label{sec: experiment section}
    In this Section we benchmark REValueD on the discretised versions of the DeepMind Control Suite tasks \citep{tunyasuvunakool2020dm_control}, as utilised by \cite{seyde2022solving}. These tasks represent challenging control problems, and when discretised, they can incorporate up to 38 distinct sub-action spaces. We also provide results for a selection of discretised MetaWorld tasks \citep{yu2020meta} in Appendix \ref{sec: metaworld}. Finally, as an additional baseline, we compare REValueD to a version of DecQN with a distributional critic in Appendix \ref{sec: distributional critics}. 

    Unless stated otherwise, we employ the same Bang-Off-Bang discretisation as \cite{seyde2022solving}, in which each dimension of the continuous action is discretised into three bins. For comparative analysis, we evaluate REValueD against the vanilla DecQN \citep{seyde2022solving} and a Branching Dueling Q-Network (BDQ) \citep{tavakoli2018action}. For a more like-to-like comparison, we compare REValueD to an ensembled variant of BDQ in Appendix \ref{sec: comparisons with ensembled baselines} in select DMC tasks. To measure the performance, after every 1000 updates we plot the returns of a test episode where actions are selected according to a greedy policy. Implementation details are given in Appendix \ref{sec: experimentation details}.

    \begin{figure}[t!]
        \centering
        \includegraphics[width=\linewidth]{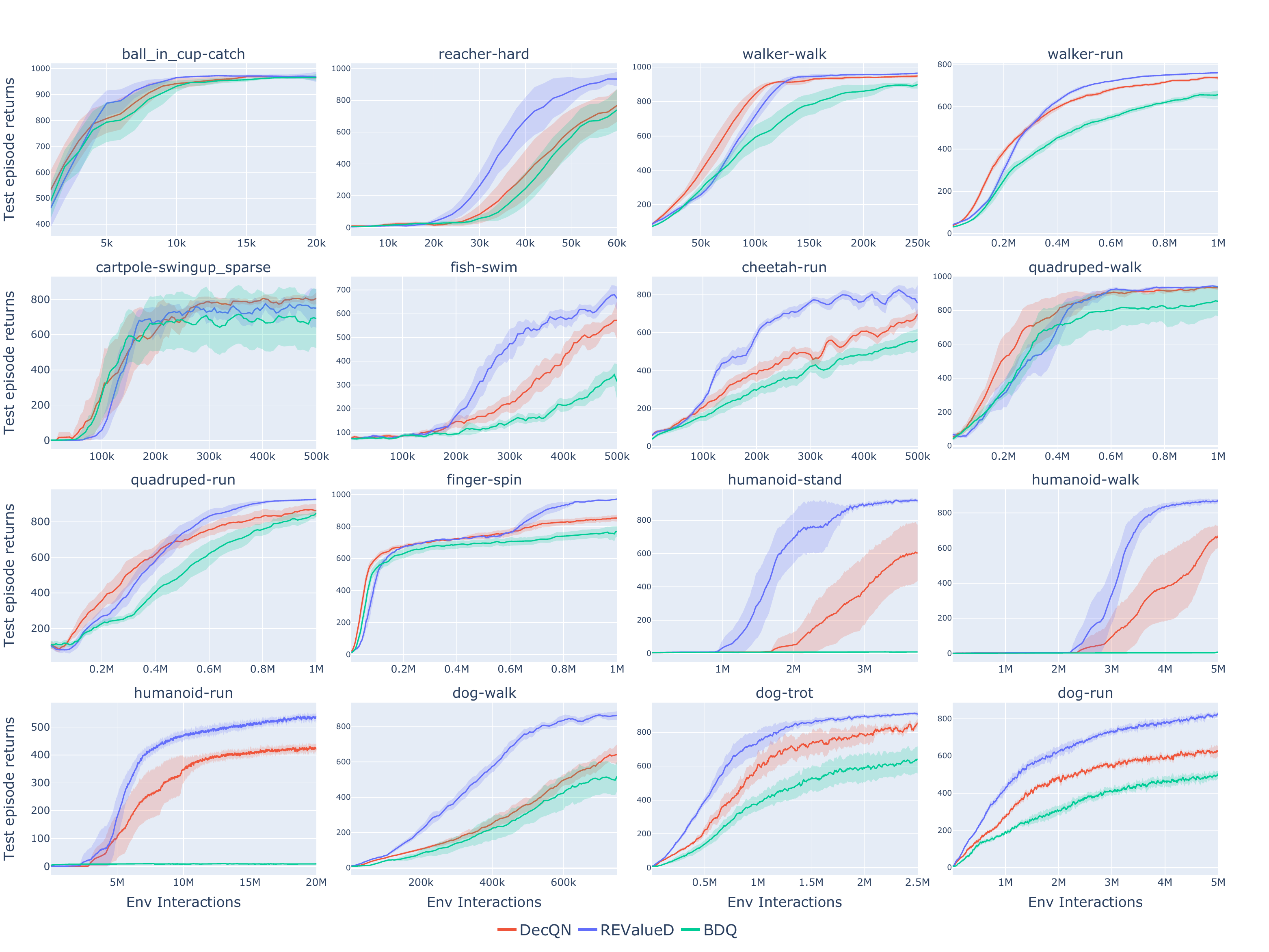}
        \caption{Performance for the Discretised DeepMind Control Suite tasks. We compare REValueD with DecQN and BDQ. The solid line corresponds to the mean of 10 seeds, with the shaded area corresponding to a 95\% confidence interval.}
        \label{fig: main results}
    \end{figure}

    \subsection*{DeepMind Control Suite}
        \label{sec: main results}
        The mean performance, together with an accompanying $95\%$ confidence interval, is shown in Figure \ref{fig: main results}. This comparison includes the results achieved by REValueD, DecQN, and BDQ on the DM Control Suite tasks. It is immediately clear that BDQ is the least successful, with a complete failure to learn in the humanoid environments. We also observe that REValueD demonstrates considerable improvement over DecQN in most tasks, especially the more challenging tasks where the number of sub-action spaces is larger. This advantage is particularly noticeable in the humanoid and dog tasks, which have an exceptionally large number of sub-action spaces. These findings provide strong evidence for the increased learning efficiency conferred by REValueD in FMDPs.
        
        The performance of these algorithms as the number of sub-actions per dimension increases is another important factor to consider. We investigate this by varying the number of bins used to discretise the continuous actions. In Figure \ref{fig: bin size results} we see the results in the demanding dog-walk task. For $n=30$, REValueD maintains a strong performance, whereas the performance of DecQN has already started to falter. We can see that even for $n = 75$ or larger, REValueD still demonstrates an acceptable level of performance, whereas DecQN is only just showing signs of learning. Interestingly, BDQ performs consistently in the dog-walk task, despite the large $n_i$ values, managing to learn where DecQN falls short. We present further results in Figure \ref{fig: appendix bin size plots} in Appendix \ref{sec: further bin size plots}. 

            \begin{figure}[t!]
                \centering
                \includegraphics[width=\linewidth]{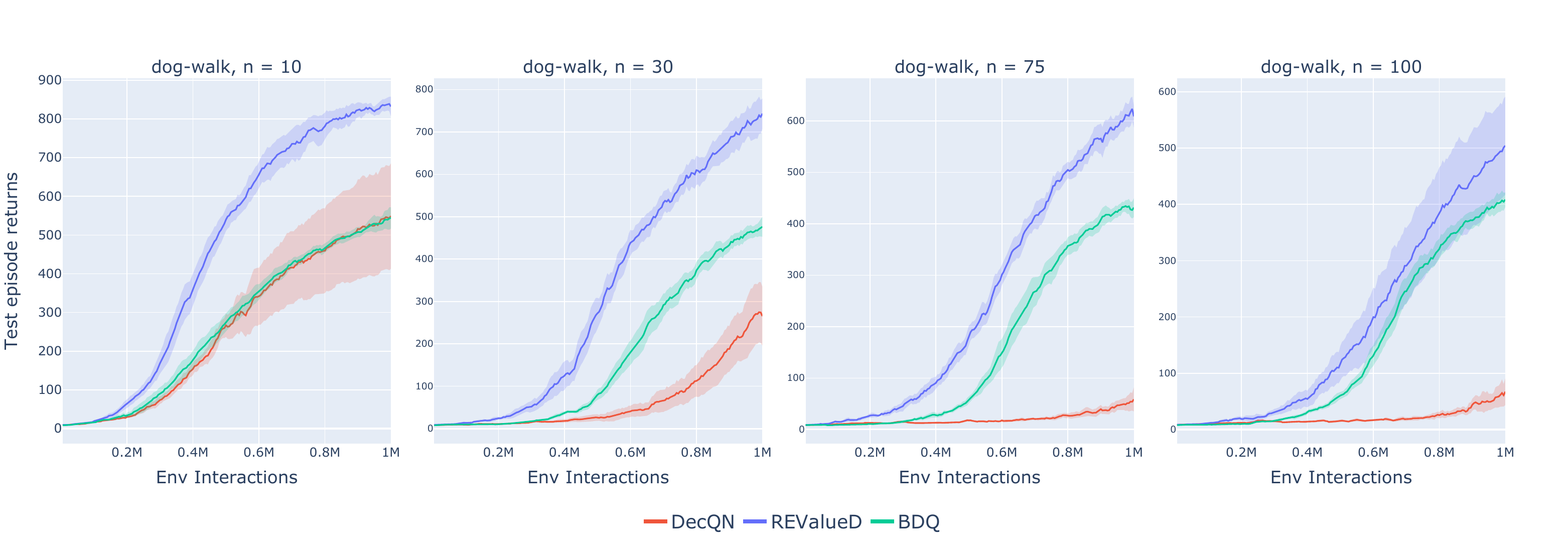}
                \caption{Here we assess how the performance of DecQN and REValueD are effected by increasing the size of each sub-action space. We conduct experiments on the fish-swim, cheetah-run and dog-walk tasks. $n$ corresponds to the size of the sub-action spaces, \textit{i.e.} $|\mathcal{A}_i| = n$ for all $i$. The solid line corresponds to the mean of 10 seeds, with the shaded area corresponding to a 95\% confidence interval. Further results are given in Figure \ref{fig: appendix bin size plots} in Appendix \ref{sec: further bin size plots}.}
                \label{fig: bin size results}
            \end{figure}

    \subsection*{Ablation studies}
        \label{sec: ablations}
        \textbf{Relative contribution of the regularisation loss:} To demonstrate the effect that the regularisation loss (Equation \eqref{eq: regularisation loss}) has on the performance of REValueD we analyse its contribution in the most challenging DM Control Suite tasks. We also provide a further ablation in Appendix \ref{sec: regularisation loss tabular FMDP} that demonstrates how the loss can delay the negative effects of exploratory updates on optimal sub-action utility values in a Tabular FMDP. Table \ref{tab: ensemble comparison} compares the performance of REValueD both with and without the regularisation loss; the latter we refer to as DecQN+Ensemble. As demonstrated in the Table, the inclusion of an ensemble of critics noticeably enhances the performance of DecQN. Further, the incorporation of the regularisation loss further augments the performance of the ensemble, clearly illustrating the merits of the regularisation loss. 

            \begin{table}[t!]
            \caption{This Table demonstrates the impact of the regularisation loss by contrasting the performance of REValueD with DecQN equipped only with an ensemble. Our comparison leverages the humanoid and dog tasks, deemed to be the most demanding tasks, thus necessitating careful credit assignment for sub-actions. We report the mean $\pm$ standard error of 10 seeds.}
                \centering
                \begin{center}
                \centerline{
                    \begin{tabular}{lccc}
                        \toprule
                        \multirow{2}{*}{Task} & \multicolumn{3}{c}{Algorithm} \\
                        \cmidrule{2-4}
                         & DecQN & DecQN+Ensemble & REValueD \\
                        \midrule
                        Humanoid-Stand & $604.82 \pm 85.1$ & $832.99 \pm 11.3$ & $\textbf{915.80} \boldsymbol{\pm} \textbf{6.12}$ \\
                        Humanoid-Walk  & $670.62 \pm 34.1$ & $817.63 \pm 7.66$ & $\textbf{874.33} \boldsymbol{\pm} \textbf{3.63}$ \\
                        Humanoid-Run   & $416.81 \pm 8.77$ & $478.11 \pm 4.59$ & $\textbf{534.81} \boldsymbol{\pm} \textbf{10.8}$ \\
                        Dog-Walk       & $641.13 \pm 28.8$ & $819.95 \pm 22.0$ & $\textbf{862.31} \boldsymbol{\pm} \textbf{12.0}$ \\
                        Dog-Trot       & $856.48 \pm 12.2$ & $878.47 \pm 6.33$ & $\textbf{902.01} \boldsymbol{\pm} \textbf{7.65}$ \\
                        Dog-Run        & $625.68 \pm 12.5$ & $750.73 \pm 11.2$ & $\textbf{821.17} \boldsymbol{\pm} \textbf{8.10}$ \\
                        \bottomrule
                    \end{tabular}
                }
                \end{center}
                \label{tab: ensemble comparison}
            \end{table}


        \textbf{Stochastic Environments:} Here we extend our analysis to stochastic variants of a selection of the DM Control Suite tasks. Stochastic environments exacerbate the credit assignment problem outlined in Section \ref{sec: regularised value-decomposition} as there is now an extra source of uncertainty that each decoupled actor must contend with. We consider two types of stochasticity, adding Gaussian white noise to the reward and states, respectively. Figure \ref{fig: stochastic envs} shows the results in the three variants of the dog task. We observe that all three algorithms are robust to stochasticity being added to the rewards with the performance being largely the same in the three dog tasks. When stochasticity is added to the states, the performance of all three algorithms drops, though we see that the same hierarchy of performance is maintained, \textit{i.e.} REValueD outperforms DecQN whilst both outperform BDQ. In Appendix \ref{sec: further results in stochastic environments} we show show similar results in more stochastic DM Control Suite environments. 

        \begin{figure}[t!]
            \centering
            \includegraphics[width=\linewidth]{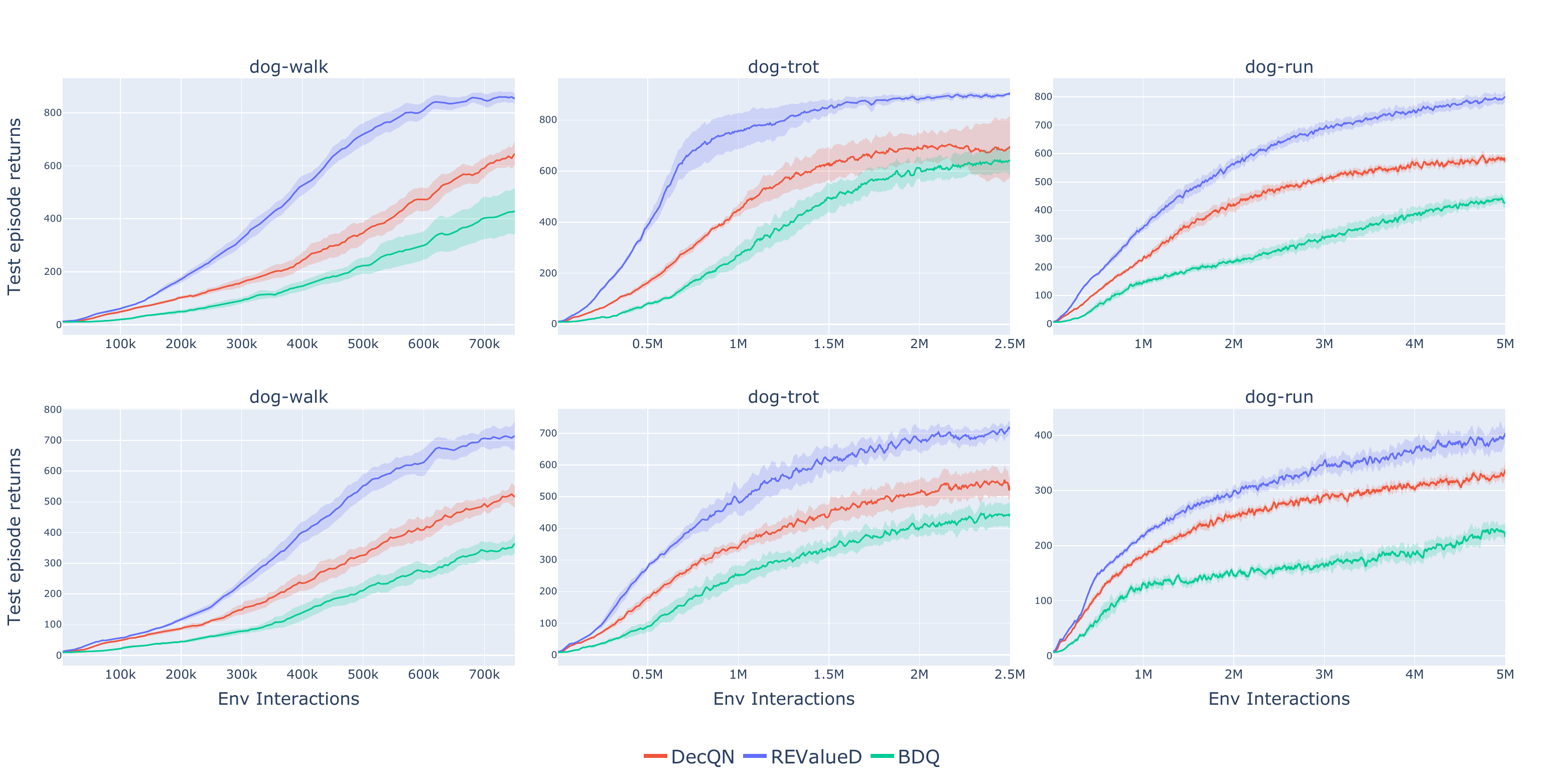}
            \caption{Stochastic environment tasks. In the top row we added Gaussian white noise $(\sigma = 0.1)$ to the rewards, whilst in the bottom row we added Gaussian white noise to the state. Further results are given in Figure \ref{fig: appendix stochastic envs} in Appendix \ref{sec: further results in stochastic environments}. \label{fig: stochastic envs}}
        \end{figure}

        \textbf{Assessing the impact of the ensemble size:} Of particular interest is how the size of the ensemble used in REValueD affects performance. In Table \ref{tab: asymptotic ensemble size results} we provide the final asymptotic results. In general, the performance is reasonably robust to the ensemble size for the walker/cheetah/quadruped-run tasks. 
        In particular, we note that an ensemble size of 3 in walker-run has the best asymptotic performance. 
        It has been noted that some target variance can be useful for exploration purposes \citep{chen2021randomized}, and so it is possible that for an easier task like walker using a higher ensemble reduces the variance too much. However, as the complexity of the task increases we start to see the beneficial impact that a larger ensemble size has. This is evident in the demanding dog-run tasks, where performance peaks with an ensemble size of 15. To summarise these findings, it may be beneficial to use a smaller ensemble size for tasks which are easier in nature. A larger ensemble size improves performance for more complex tasks.
        
            \begin{table}[t!]
            \caption{Asymptotic results for REValueD with varying ensemble size across various DM Control Suite tasks. We report the mean $\pm$ standard error over 10 seeds.}
                \centering
                \begin{center}
                \centerline{
                    \begin{tabular}{lcccccc}
                        \toprule
                        \multirow{2}{*}{Task} & \multicolumn{6}{c}{Ensemble Size} \\
                        \cmidrule{2-7}
                         & 1 & 3 & 5 & 10 & 15 & 20 \\
                        \midrule
                        Walker-Run    & $763.97 \pm 4.75$ & $\textbf{779.40} \boldsymbol{\pm} \textbf{1.29}$ & $769.25 \pm 2.59$ & $761.74 \pm 2.30$ & $762.32 \pm 4.95$ & $755.86 \pm 7.35$ \\
                        Cheetah-Run   & $759.01 \pm 12.5$ & $843.46 \pm 16.3$ & $781.30 \pm 14.3$ & $828.60 \pm 25.9$ & $\textbf{883.55} \boldsymbol{\pm} \textbf{2.99}$ & $807.75 \pm 9.94$ \\
                        Quadruped-Run & $911.01 \pm 13.4$ & $\textbf{927.34} \boldsymbol{\pm} \textbf{5.14}$ & $927.32 \pm 8.17$ & $924.52 \pm 3.92$ & $926.79 \pm 4.27$ & $922.33 \pm 7.56$ \\
                        Dog-Walk      & $838.39 \pm 8.40$ & $\textbf{914.51} \boldsymbol{\pm} \textbf{4.01}$ & $904.47 \pm 2.11$ & $886.73 \pm 7.38$ & $899.77 \pm 7.90$ & $878.40 \pm 8.81$ \\
                        Dog-Trot      & $843.61 \pm 19.8$ & $915.67 \pm 4.14$ & $912.97 \pm 5.09$ & $910.24 \pm 10.4$ & $\textbf{915.63} \boldsymbol{\pm} \textbf{3.59}$ & $897.34 \pm 5.16$ \\
                        Dog-Run       & $675.51 \pm 9.10$ & $771.25 \pm 11.6$ & $815.29 \pm 15.9$ & $821.17 \pm 8.10$ & $\textbf{834.77} \boldsymbol{\pm} \textbf{4.16}$ & $823.20 \pm 7.29$ \\
                        \bottomrule
                    \end{tabular}
                }
                \end{center}
                
                \label{tab: asymptotic ensemble size results}
            \end{table}

\section{Conclusion}
    \label{sec: conclusion}
    Recent works have successfully taken the idea of value-decomposition from MARL and applied it to single agent FMDPs. Motivated by the known issues surrounding maximisation bias in Q-learning, we analyse how this is affected when using the DecQN value-decomposition. We show theoretically that whilst the estimation bias is lowered, the target variance increases. Having a high target variance can cause instabilities in learning. To counteract this, we introduce an ensemble of critics to control the target variance, whilst showing that this leaves the estimation bias unaffected. 

    Drawing on the parallels between single agent FMDPs and MARL, we also address the credit assignment issue. Exploratory sub-actions from one dimension can effect the value of optimal sub-actions from another dimension, giving rise to a credit assignment problem. 
    If the global action contained some exploratory sub-actions then the TD-target can be undervalued with respect to optimal sub-actions in the global action. 
    To counteract this, we introduce a regularisation loss that we aim to minimise alongside the DecQN loss. The regularisation works at the individual utility level, mitigating the effect of exploratory sub-actions by enforcing that utility estimates do not stray too far from their current values.

    These two contributions lead to our proposed algorithm: REValueD. We compare the performance of REValueD to DecQN and BDQ in tasks from the DM Control Suite. We find that, in most tasks, REValueD greatly outperforms DecQN and BDQ, notably in the challenging humanoid and dog tasks which have $N = 21 \mbox{ and } 38$ sub-action spaces, respectively. We also provide various ablation studies, including the relative contribution that the regularisation loss has on the overall performance of REValueD and how the performance changes as the number of sub-actions per dimension increases.

    Potential avenues for future work include a more rigorous exploration of the work in Appendix \ref{sec: distributional critics} to better understand the benefits of distributional reinforcement learning \citep{bellemare2023distributional} in FMDPs, as a distributional perspective to learning can help deal with the uncertainty induced by exploratory sub-actions. Further, using an uninformative $\epsilon$-greedy exploration strategy can be wasteful in FMDPs due to the many possible combinations of sub-actions, and so more advanced exploration techniques could be developed to effectively choose optimal sub-actions in one dimension whilst continuing to explore in other dimensions. 

\newpage
\bibliography{ref}

\newpage
\appendix
\section{Proof of Theorems 1 and 2}
\label{appendix: proof of theorem 1 and 2}

    \begin{lemma}
        \label{lemma: iid uniform rvs}
        Let $\{X_i\}_{i=1}^n$ be i.i.d Uniform$(-b, b)$ random variables, and let $Z = \max_i \{X_i\}_{i=1}^n$ be their maximum. Then, $\mathbb{E}[Z] = b \frac{n-1}{n+1}$ and $\mbox{Var}(Z) = \frac{4b^2n}{(n+1)^2(n+2)}$.  
    \end{lemma}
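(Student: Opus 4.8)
The plan is to reduce the computation to the well-understood maximum of uniform random variables on the unit interval by an affine change of variables, and then read off the first two moments from a Beta distribution (or, equivalently, from two elementary integrals).

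First I would set $U_i = (X_i + b)/(2b)$, so that $U_1, \dots, U_n$ are i.i.d.\ Uniform$(0,1)$ and $X_i = b(2U_i - 1)$; consequently $Z = \max_i X_i = b(2W - 1)$, where $W = \max_i U_i$. Since $W$ has c.d.f.\ $F_W(w) = w^n$ on $[0,1]$, its density is $f_W(w) = n w^{n-1}$, i.e.\ $W \sim \mathrm{Beta}(n,1)$. A one-line integration then gives $\mathbb{E}[W] = \int_0^1 n w^{n}\,\mathrm{d}w = \frac{n}{n+1}$ and $\mathbb{E}[W^2] = \int_0^1 n w^{n+1}\,\mathrm{d}w = \frac{n}{n+2}$, hence $\mathrm{Var}(W) = \frac{n}{n+2} - \frac{n^2}{(n+1)^2} = \frac{n}{(n+1)^2(n+2)}$.

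Then I would translate back using the affine relation $Z = b(2W-1)$: since expectation is linear, $\mathbb{E}[Z] = b\big(2\mathbb{E}[W] - 1\big) = b\big(\tfrac{2n}{n+1} - 1\big) = b\,\tfrac{n-1}{n+1}$; and since variance is invariant under translation and picks up the square of the multiplicative constant, $\mathrm{Var}(Z) = (2b)^2\,\mathrm{Var}(W) = \frac{4b^2 n}{(n+1)^2(n+2)}$, which is the claim.

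There is no genuine obstacle here — the computation is routine — so the only thing to be careful about is the bookkeeping in the substitution (in particular that the additive shift leaves the variance untouched while the factor $2b$ produces the $4b^2$), and, for a reader who prefers to avoid invoking the Beta distribution, to carry out instead the two direct integrals $\mathbb{E}[Z] = \int_{-b}^{b} z\,f_Z(z)\,\mathrm{d}z$ and $\mathbb{E}[Z^2] = \int_{-b}^{b} z^2 f_Z(z)\,\mathrm{d}z$ with $f_Z(z) = \frac{n}{2b}\big(\frac{z+b}{2b}\big)^{n-1}$. A quick sanity check at $n = 1$ gives $\mathbb{E}[Z] = 0$ and $\mathrm{Var}(Z) = b^2/3$, matching the variance of a single Uniform$(-b,b)$ variable, which confirms the formulas.
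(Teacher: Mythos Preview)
Your proof is correct and follows essentially the same route as the paper: both derive the density of the maximum via the product of c.d.f.'s and compute the first two moments using the substitution $w = (z+b)/(2b)$. The only difference is organisational --- you apply the affine change of variables at the level of the random variables rather than inside the integrals, which lets you read off $\mathrm{Var}(Z) = (2b)^2\,\mathrm{Var}(W)$ directly and avoids the integration-by-parts step the paper uses for the second moment.
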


    \begin{proof}
        Let $F_X(x) = \frac{x + b}{2b}$ be the CDF of a random variable with distribution Uniform$(-b, b)$. Thus, we have that
        \begin{align}
            F_Z(z) &= \mathbb{P}(Z \leq z) \\
            &= \mathbb{P}(\max_i \{X_i\}_{i=1}^n \leq z) \\
            &= \left[F_X(z)\right]^n \\
            &= \left[\frac{z + b}{2b}\right]^n\;.
        \end{align}
        Now, the pdf of $Z$ is given by $f_Z(z) = \diff{}{z}F_Z(z) = \frac{n}{2b}\left( \frac{z + b}{2b} \right)^{n-1}$. From this, we can deduce that 
        \begin{align}
            \mathbb{E}\left[Z \right] &= \int_{-b}^b z \frac{n}{2b}\left( \frac{z + b}{2b}\right)^{n-1} \mbox{d}z = b \frac{n-1}{n+1} \;; \\
            \mbox{Var}(Z) &= \left(\int_{-b}^b z^2 \frac{n}{2b}\left( \frac{z + b}{2b}\right)^{n-1} \mbox{d}z \right) - \mathbb{E}\left[Z \right]^2 = \frac{4b^2 n}{(n+1)^2(n+2)} \; .
        \end{align}
        Note that the first integral can be solved by using the substitution $y = \frac{z + b}{2b}$ and the second can be solved by parts followed by a similar substitution. 
    \end{proof}

    \begin{lemma}
        \label{lemma: expectation inequality}
        For $n_i \in \mathbb{N}, n_i \geq 2 \; $, we have $\frac{1}{N} \sum_{i=1}^N \frac{n_i - 1}{n_i + 1} \leq \frac{\left(\prod_{i=1}^N n_i\right) - 1}{\left(\prod_{i=1}^N n_i\right) + 1}$ for $N \geq 1$.
    \end{lemma}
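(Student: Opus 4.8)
The plan is to reduce everything to the monotonicity of the single-variable map $f(x) = \frac{x-1}{x+1}$. First I would observe that the right-hand side of the claimed inequality is exactly $f\!\left(\prod_{i=1}^N n_i\right)$ and that the $i$th summand on the left-hand side is $f(n_i)$, so the statement is equivalent to
\[
\frac{1}{N}\sum_{i=1}^N f(n_i) \;\leq\; f\!\left(\prod_{i=1}^N n_i\right).
\]
Writing $f(x) = 1 - \frac{2}{x+1}$ makes it transparent that $f$ is strictly increasing on $(-1,\infty)$; this is the one analytic fact the argument rests on.

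Next I would note that because every $n_k \geq 2 \geq 1$, for each fixed index $i$ we have $\prod_{k=1}^N n_k = n_i \cdot \prod_{k \neq i} n_k \geq n_i$, since the remaining factors multiply to at least $1$. Applying the monotonicity of $f$ gives $f(n_i) \leq f\!\left(\prod_{k=1}^N n_k\right)$ for every $i$. Summing this over $i = 1, \dots, N$ and dividing by $N$ yields the displayed inequality, and hence the lemma; the case $N = 1$ is the trivial equality, consistent with the stated hypothesis $N \geq 1$.

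I do not anticipate a real obstacle here: the only point requiring care is making explicit that $n_k \geq 1$ (a consequence of $n_k \geq 2$) is what licenses the bound $\prod_{k\neq i} n_k \geq 1$, and therefore the termwise comparison. If a purely termwise argument feels too abrupt, an alternative is induction on $N$: with $P = \prod_{i=1}^N n_i$ and $m = n_{N+1}$, the inductive hypothesis gives $\sum_{i=1}^N f(n_i) \leq N f(P)$, so $\frac{1}{N+1}\bigl(N f(P) + f(m)\bigr)$ is a convex combination of $f(P)$ and $f(m)$, hence at most $\max\{f(P), f(m)\} \leq f(Pm)$ because $Pm \geq P$ and $Pm \geq m$. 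Either way the proof is short, with the monotonicity of $f$ doing essentially all of the work.
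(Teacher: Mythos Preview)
Your argument is correct, and it takes a genuinely different route from the paper. The paper also introduces $f(x) = \frac{x-1}{x+1}$, but instead of comparing termwise it first invokes the \emph{concavity} of $f$ together with Jensen's inequality to obtain $\frac{1}{N}\sum_i f(n_i) \leq f\!\left(\frac{1}{N}\sum_i n_i\right)$, and only then uses monotonicity together with the fact that $\frac{1}{N}\sum_i n_i \leq \prod_i n_i$ (which holds for $n_i \geq 2$) to reach $f\!\left(\prod_i n_i\right)$. Your approach bypasses both the concavity/Jensen step and the arithmetic-mean-versus-product comparison entirely: you go straight from $n_i \leq \prod_k n_k$ to $f(n_i) \leq f\!\left(\prod_k n_k\right)$ and average. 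This is more elementary, and in fact your proof only needs $n_i \geq 1$ rather than $n_i \geq 2$, whereas the paper's intermediate inequality $\frac{1}{N}\sum_i n_i \leq \prod_i n_i$ genuinely requires the factors to be at least $2$ (it fails, e.g., for $n_1 = n_2 = 1$). The Jensen detour buys nothing extra here; your termwise bound is the cleaner argument.
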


    \begin{proof}
        Note that $f(x) = \frac{x - 1}{x + 1}$ is concave and increasing on $\mathbb{R}_+$. Thus, using Jensen's inequality we have
            $$\frac{1}{N}\sum_{i=1}^N f(n_i) \leq f\left(\frac{1}{N} \sum_{i=1}^N n_i \right) \; .$$
        Since $f$ is increasing we also have 
            $$f\left(\frac{1}{N} \sum_{i=1}^N n_i \right) \leq f\left(\prod_{i=1}^N n_i \right) \; .$$
        We have thus shown that
            $$\frac{1}{N} \sum_{i=1}^N \frac{n_i - 1}{n_i + 1} \leq \frac{\left(\prod_{i=1}^N n_i\right) - 1}{\left(\prod_{i=1}^N n_i\right) + 1} \; .$$
    \end{proof}

    \begin{lemma}
        \label{lemma: variance inequality}
        For $n_i \in \mathbb{N}, n_i \geq 2 \; $, we have $\frac{\prod_{i=1}^N n_i}{\left(1 + \prod_{i=1}^N n_i\right)^2 \left(2 + \prod_{i=1}^N n_i\right)} \leq \frac{1}{N^2} \sum_{i=1}^N \frac{n_i}{(n_i + 1)^2 (n_i + 2)}$ for $N \geq 1$. 
    \end{lemma}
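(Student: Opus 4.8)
The plan is to reduce the $N$-variable claim --- which, writing $g(x)=\frac{x}{(x+1)^2(x+2)}$, reads $g\!\left(\prod_i n_i\right)\le \frac{1}{N^2}\sum_i g(n_i)$ --- to a single one-variable scaling estimate for $g$, using that every $n_i\ge 2$ makes the product $\prod_j n_j$ overwhelm any individual factor $n_i$.

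First I would record two elementary facts. (i) $g$ is strictly decreasing on $[1,\infty)$: since $1/g(x)=x^2+4x+5+2/x$, its derivative $2x+4-2/x^2$ is positive there. (ii) \emph{Scaling bound:} for real $m\ge 2$ and $n\ge 2$, $g(mn)\le \tfrac1m g(n)$. Clearing denominators, this is equivalent to $m^2(n+1)^2(n+2)\le (mn+1)^2(mn+2)$, and expanding both sides the difference factors as $(m-1)\,[\,m^2n^3-5mn-2(m+1)\,]$; for $m,n\ge 2$ we have $m^2n^3\ge 8mn$ (from $m^2\ge 2m$ and $n^3\ge 4n$), so the bracket is at least $3mn-2m-2\ge 6>0$, while at $m=1$ the two sides are equal --- which is all that is needed below.

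With these in hand the argument is a short chain. Since each factor is $\ge 2$, for every $i$ we have $\prod_j n_j=n_i\prod_{j\ne i}n_j\ge 2^{N-1}n_i$, and both quantities lie in $[2,\infty)$, so (i) gives $g\!\left(\prod_j n_j\right)\le g\!\left(2^{N-1}n_i\right)$. Averaging over $i$ and then applying the scaling bound with $m=2^{N-1}$ (equal to $1$ when $N=1$, an integer $\ge 2$ otherwise) yields
\[
    g\!\left(\prod_{j=1}^N n_j\right)\ \le\ \frac1N\sum_{i=1}^N g\!\left(2^{N-1}n_i\right)\ \le\ \frac{1}{N\,2^{N-1}}\sum_{i=1}^N g(n_i).
\]
Finally $2^{N-1}\ge N$ for all $N\ge 1$ (immediate induction, $2^N=2\cdot 2^{N-1}\ge 2N\ge N+1$), so $N\,2^{N-1}\ge N^2$ and the last sum is pre-multiplied by a factor at most $\frac{1}{N^2}$, which is exactly the claim.

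The only substantive step is the scaling bound, and it is also where care is needed: the inequality $g(mn)\le\tfrac1m g(n)$ is \emph{false} for non-integer $m\in(1,2)$ (e.g.\ $m=3/2$, $n=2$ gives $g(3)>\tfrac23 g(2)$), so it must be invoked only at $m=1$ and at integers $m\ge 2$ --- precisely the range where the factorisation above together with $m^2n^3\ge 8mn$ applies. Everything else is bookkeeping: monotonicity of $g$, an average over the $N$ coordinates, and the elementary estimate $2^{N-1}\ge N$.
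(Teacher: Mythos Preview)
Your argument is correct. The scaling bound $g(mn)\le \tfrac{1}{m}g(n)$ for $m,n\ge 2$ (and the careful remark that it fails for $m\in(1,2)$, so that only $m=2^{N-1}$ is used) together with monotonicity and $2^{N-1}\ge N$ delivers exactly what is needed.

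It is, however, a genuinely different route from the paper's. The paper first invokes Jensen's inequality for the convex function $f(x)=\frac{x}{(x+1)^2(x+2)}$ to obtain $\tfrac{1}{N}f(\bar n)\le \tfrac{1}{N^2}\sum_i f(n_i)$ with $\bar n=\tfrac{1}{N}\sum_i n_i$, and then separately proves $f(\prod_i n_i)\le \tfrac{1}{N}f(\bar n)$ via a doubling inequality $h(2x)\ge 2h(x)$ for $h(x)=(1+x)^2(1+2/x)$, combined with $\prod_i n_i\ge 2^{N-1}\bar n$. Your approach bypasses Jensen (and hence any convexity check) entirely: you compare the product directly with $2^{N-1}n_i$ for each coordinate and average afterwards. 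This is cleaner and more elementary, and it gives the slightly stronger intermediate bound $g(\prod_i n_i)\le \tfrac{1}{N\,2^{N-1}}\sum_i g(n_i)$ before relaxing to $\tfrac{1}{N^2}$. The paper's version, on the other hand, makes the provenance of the $1/N^2$ factor more transparent (one $1/N$ from Jensen, one from the product/mean comparison). Both arguments ultimately rest on the same scaling phenomenon --- your $g(mn)\le\tfrac{1}{m}g(n)$ is equivalent to the paper's $h(mx)\ge m\,h(x)$ --- but you reach it without the detour through the arithmetic mean.
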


    \begin{proof}
        Note that $f(x) = \frac{x}{(x + 1)^2 (x + 2)}$ is convex and decreasing on $\{x: x \in \mathbb{R}_+, x\geq 2\}$. Thus, using Jensen's inequality we have
        \begin{equation}
            \frac{1}{N} f\left( \frac{1}{N} \sum_{i=1}^N n_i \right) \leq \frac{1}{N^2} \sum_{i=1}^N f(n_i)\;.
        \end{equation}
        Now we need to show that $f\left(\prod_{i=1}^N n_i\right) \leq \frac{1}{N} f\left( \frac{1}{N} \sum_{i=1}^N n_i \right)$; that is, we want to show that
        \begin{equation}
            \frac{\prod_{i=1}^N n_i}{\left(1 + \prod_{i=1}^N n_i \right)^2\left(2 + \prod_{i=1}^N n_i \right)} \leq \frac{\frac{1}{N}\sum_{i=1}^N n_i}{N\left(1 + \frac{1}{N}\sum_{i=1}^N n_i \right)^2 \left(2 + \frac{1}{N}\sum_{i=1}^N n_i \right)} \; ,
        \end{equation}
        or, equivalently, that
        \begin{equation}
            N\left(1 + \frac{1}{N}\sum_{i=1}^N n_i\right)^2 \left(1 + \frac{2}{\frac{1}{N}\sum_{i=1}^N n_i}\right) \leq \left(1 + \prod_{i=1}^N n_i\right)^2 \left(1 + \frac{2}{\prod_{i=1}^N n_i}\right) \; .
        \end{equation}
        Let $g(x) = (1 + x)^2\left(1 + \frac{2}{x}\right)$. It is straight-forward to verify that $g$ is increasing for $x\geq2$ and satisfies $g(2x) \geq 2g(x)$ for $x\geq2$. We have that $\prod_{i=1}^N n_i \geq 2^{N-1} \frac{1}{N} \sum_{i=1}^N n_i$ since all factors are greater than or equal to 2, and at least one factor is greater than or equal to $\frac{1}{N} \sum_{i=1}^N n_i$. It then follows that
        \begin{equation}
            g\left(\prod_{i=1}^N n_i\right) \geq g\left(2^{N-1} \frac{1}{N} \sum_{i=1}^N n_i\right) \geq 2^{N-1} g\left(\frac{1}{N}\sum_{i=1}^N n_i\right) \geq Ng\left(\frac{1}{N}\sum_{i=1}^N n_i\right) \; ;
        \end{equation}
        where in the last step we have used Bernoulli's inequality: 
        \begin{equation}
            2^{N-1} = (1 + 1)^{N-1} \geq 1 + (N - 1) = N \; .
        \end{equation}
        We have thus shown that $g\left(\prod_{i=1}^N n_i\right) \geq Ng\left(\frac{1}{N}\sum_{i=1}^N n_i\right)$, and, subsequently, that $f\left(\prod_{i=1}^N n_i\right) \leq \frac{1}{N^2}\sum_{i=1}^N f(n_i)$.

    \end{proof}

    \mainthm*
    \textbf{Proof of Theorem \ref{theorem: main result}}:
    First note that in Lemmas \ref{lemma: expectation inequality} and \ref{lemma: variance inequality} we have assumed that $n_i \in \mathbb{N}, n_i \geq 2$. This is because $n_i$ corresponds to the size of a discrete (sub)action space, so we know it must be at least size 2 (otherwise there would be no decision to make). 
        \begin{enumerate}
            \item As shown in \cite{thrun1993issues}, $\mathbb{E}[Z_s^{dqn}] = \gamma b \frac{|\mathcal{A}| - 1}{|\mathcal{A}| + 1}$. For an FMDP, $|\mathcal{A}| = \left(\prod_{i=1}^N n_i\right)$, from which it follows that $\mathbb{E}[Z_s^{dqn}] = \gamma b \frac{\left(\prod_{i=1}^N n_i\right) - 1}{\left(\prod_{i=1}^N n_i\right) + 1}$. For the DecQN decomposition, the expectation is 
            \begin{align}
                \mathbb{E}[Z_s^{dec}] &= \frac{\gamma}{N} \sum_{i=1}^N \mathbb{E}\left[ \max_{a_i \in \mathcal{A}_i} \epsilon_{s, a_i}^i \right] \; ; \\
                &= \frac{\gamma b}{N} \sum_{i=1}^N \frac{n_i - 1}{n_i + 1} \; .
            \end{align}
            Here, we have used the expectation from Lemma \ref{lemma: iid uniform rvs}, as the errors are i.i.d Uniform$(-b, b)$ random variables. We have shown in Lemma \ref{lemma: expectation inequality} that $\frac{1}{N} \sum_{i=1}^N \frac{n_i - 1}{n_i + 1} \leq \frac{\left(\prod_{i=1}^N n_i\right) - 1}{\left(\prod_{i=1}^N n_i \right) + 1}$, therefore $\mathbb{E}[Z_s^{dec}] \leq \mathbb{E}[Z_s^{dqn}]$. \\

            \item First, since the errors are i.i.d Uniform$(-b, b)$ random variables we use the variance from Lemma \ref{lemma: iid uniform rvs} to get
            \begin{align}
                \mbox{Var}(Z_s^{dqn}) &= \gamma^2 \mbox{Var}\left(\max_{\textbf{a}} \epsilon_{s, \textbf{a}}\right) = \gamma^2 \frac{4b^2 \prod_{i=1}^N n_i}{\left(1 + \prod_{i=1}^N n_i\right)^2 \left(2 + \prod_{i=1}^N n_i\right)}\; ; \\
                \mbox{Var}(Z_s^{dec}) &= \frac{\gamma^2}{N^2}\sum_{i=1}^N \mbox{Var}\left(\max_{a_i \in \mathcal{A}_i} \epsilon^i_{s, a_i}\right) = \frac{\gamma^2}{N^2} \sum_{i=1}^N \frac{4b^2 n_i}{(n_i + 1)^2 (n_i + 2)} \; .
            \end{align}
            We have shown in Lemma \ref{lemma: variance inequality} that $\frac{\prod_{i=1}^N n_i}{\left(1 + \prod_{i=1}^N n_i\right)^2 \left(2 + \prod_{i=1}^N n_i\right)} \leq \frac{1}{N^2} \sum_{i=1}^N \frac{n_i}{(n_i + 1)^2 (n_i + 2)}$, therefore $\mbox{Var}(Z_s^{dqn}) \leq \mbox{Var}(Z_s^{dec})$.
        \end{enumerate}

    \secondthm*
    \textbf{Proof of Theorem \ref{theorem: second result}}
        \begin{enumerate}
            \item 
                Since for fixed $i$ the errors are i.i.d. across $k$, and in fact have the same distribution as the error terms in the DecQN decomposition, we have that
                \begin{align}
                    \mathbb{E}[Z_s^{ens}] &= \frac{\gamma}{N} \sum_{i=1}^N \frac{1}{K} \sum_{k=1}^K \mathbb{E}\left[ \max_{a_i \in \mathcal{A}_i} \epsilon_{s, a_i}^{i, k} \right] \; ; \\
                    &= \frac{\gamma}{N} \sum_{i=1}^N \frac{1}{K} \cdot K \mathbb{E}\left[ \max_{a_i \in \mathcal{A}_i} \epsilon_{s, a_i}^{i} \right] \; ; \\
                    &= \frac{\gamma}{N} \sum_{i=1}^N \mathbb{E}\left[ \max_{a_i \in \mathcal{A}_i} \epsilon_{s, a_i}^{i} \right] \; ; \\
                    &= \mathbb{E}[Z_s^{dec}] \; .
                \end{align} \\
            \item 
                Using a similar argument to 1., we have that
                \begin{align}
                    \mbox{Var}(Z_s^{ens}) &= \frac{\gamma^2}{N^2} \sum_{i=1}^N \frac{1}{K^2} \sum_{k=1}^K \mbox{Var}\left(\max_{a_i \in \mathcal{A}_i} \epsilon_{s, a_i}^{i, k}\right) \; ; \\
                    &= \frac{\gamma^2}{N^2} \sum_{i=1}^N \frac{1}{K^2} \cdot K \mbox{Var}\left(\max_{a_i \in \mathcal{A}_i} \epsilon_{s, a_i}^{i}\right) \; ; \\
                    &= \frac{1}{K} \left[ \frac{\gamma^2}{N^2} \sum_{i=1}^N \mbox{Var}\left(\max_{a_i \in \mathcal{A}_i} \epsilon_{s, a_i}^{i}\right) \right] \; ; \\
                    &= \frac{1}{K} \mbox{Var}(Z_s^{dec}) \; .
                \end{align}
        \end{enumerate}

\section{Analysis of the sum value-decomposition}
    \label{sec: sum analysis}
    \cite{tang2022leveraging} propose a value-decomposition similar to the decomposition proposed by \cite{seyde2022solving} (Equation \eqref{eq: decqn decomposition}). Their value-decomposition approximates the global Q-value as the \textit{sum} of the utilities, whereas DecQN define their decomposition as the \textit{mean} of the utilities. Using our notation, the value-decomposition proposed by \cite{tang2022leveraging} is defined as
        \begin{equation}
            \label{eq: tang decomposition}
            Q_\theta(s, \textbf{a}) = \sum_{i=1}^N U^i_{\theta_i}(s, a_i) \; .
        \end{equation}
    Using this decomposition in Equation \eqref{eq: target diff} we can write the sum target difference as
        \begin{align}
            \label{eq: sum target diff}
            Z_s^{sum} = \gamma \left( \sum_{i=1}^N \max_{a_i \in \mathcal{A}_i} U^i_{\theta_i}(s', a_i) - \sum_{i=1}^N \max_{a_i \in \mathcal{A}_i} U_i^{\pi_i}(s', a_i) \right) \; .
        \end{align}
    Whilst the decompositions are similar, we will now show that under this value-decomposition the expected value and the variance of this target difference are higher than that of the DQN target difference.

    \begin{restatable}{theorem}{sumthm}
        \label{theorem: sum result}
        Given the definitions of $Z_s^{dqn}$ and $Z_s^{sum}$ in Equations \eqref{eq: target diff} and \eqref{eq: sum target diff}, respectively, we have that:
        \begin{enumerate}
            \item $\mathbb{E}[Z_s^{dqn}] \leq \mathbb{E}[Z_s^{sum}] \; ;$
            \item $\mbox{Var}(Z_s^{dqn}) \leq \mbox{Var}(Z_s^{sum}) \; .$
        \end{enumerate}
    \end{restatable}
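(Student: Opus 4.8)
The plan is to mirror the proofs of Theorems \ref{theorem: main result} and \ref{theorem: second result}, simply swapping the two inequalities of Lemmas \ref{lemma: expectation inequality} and \ref{lemma: variance inequality} for their ``sum'' analogues. Following the same Thrun--Schwartz convention used there, I would first rewrite $Z_s^{sum} = \gamma \sum_{i=1}^N \big( \max_{a_i \in \mathcal{A}_i} U^i_{\theta_i}(s', a_i) - \max_{a_i \in \mathcal{A}_i} U_i^{\pi_i}(s', a_i) \big) = \gamma \sum_{i=1}^N \max_{a_i \in \mathcal{A}_i} \epsilon^i_{s, a_i}$, which is a sum of $N$ \emph{independent} random variables (independence across $i$ comes from the i.i.d.\ assumption on the $\epsilon^i_{s,a_i}$). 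Applying Lemma \ref{lemma: iid uniform rvs} term by term gives $\mathbb{E}[Z_s^{sum}] = \gamma b \sum_{i=1}^N \frac{n_i - 1}{n_i + 1}$, and, using independence to add variances, $\mbox{Var}(Z_s^{sum}) = \gamma^2 \sum_{i=1}^N \frac{4 b^2 n_i}{(n_i+1)^2(n_i+2)}$. Writing $M := |\mathcal{A}| = \prod_{i=1}^N n_i$, recall from the proof of Theorem \ref{theorem: main result} that $\mathbb{E}[Z_s^{dqn}] = \gamma b \frac{M - 1}{M + 1}$ and $\mbox{Var}(Z_s^{dqn}) = \gamma^2 \frac{4 b^2 M}{(M+1)^2 (M+2)}$. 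So both claims reduce to purely deterministic inequalities in the integers $n_i \ge 2$.

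For part 1 the task is to show $g(M) \le \sum_{i=1}^N g(n_i)$ with $g(x) = \frac{x-1}{x+1}$. I would set $\phi(n_1, \dots, n_N) = \sum_{i=1}^N g(n_i) - g\big(\prod_{i=1}^N n_i\big)$, treat the $n_i$ as real variables on $[2, \infty)$, and show $\partial \phi / \partial n_i \ge 0$ for each $i$. Since $g'(x) = 2/(x+1)^2$, this amounts to $\frac{1}{(n_i+1)^2} \ge \frac{P_i}{(P_i n_i + 1)^2}$ with $P_i = \prod_{j \ne i} n_j \ge 1$, which cross-multiplies to $(P_i - 1)(P_i n_i^2 - 1) \ge 0$ and is therefore immediate. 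Monotonicity in every coordinate then reduces the problem to the corner $n_1 = \dots = n_N = 2$, where $\phi = \tfrac{N}{3} - \frac{2^N - 1}{2^N + 1}$: this is $0$ at $N = 1$, equals $\tfrac{1}{15}$ at $N = 2$, and is $\ge \tfrac{2}{2^N+1} > 0$ for $N \ge 3$ since then $\tfrac{N}{3} \ge 1 > g(2^N)$. An equivalent route is to prove the two-factor inequality $g(ab) \le g(a) + g(b)$ for integers $a,b \ge 2$ (checking the case $a = 2$, which collapses to $(2b-1)(b-1) \ge 0$, and the case $a,b \ge 3$, where $\frac{1}{a+1} + \frac{1}{b+1} \le \tfrac12$) and then induct on $N$.

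Part 2 is then essentially free. Setting $h(x) = \frac{x}{(x+1)^2(x+2)}$, I must show $h(M) \le \sum_{i=1}^N h(n_i)$. As already noted inside Lemma \ref{lemma: variance inequality}, $h$ is decreasing on $[2, \infty)$, and $M = \prod_j n_j \ge n_i \ge 2$ for every $i$; hence $h(M) \le h(n_i) \le \sum_{j=1}^N h(n_j)$, the last step because every summand is non-negative. Multiplying through by $4 b^2 \gamma^2$ recovers $\mbox{Var}(Z_s^{dqn}) \le \mbox{Var}(Z_s^{sum})$.

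The only genuinely non-routine ingredient is the expectation inequality $g(\prod n_i) \le \sum g(n_i)$; the coordinate-monotonicity reduction to the all-twos configuration is the cleanest way to dispatch it, with the single algebraic identity $(P_i-1)(P_i n_i^2 - 1) \ge 0$ doing all of the work. It is also worth remarking why the conclusion is the exact \emph{reverse} of Theorems \ref{theorem: main result}--\ref{theorem: second result}: the DecQN decomposition carries an extra $1/N$ (and $1/N^2$ for the variance), which is precisely what pulls the averaged quantities below the DQN baseline via Lemmas \ref{lemma: expectation inequality}--\ref{lemma: variance inequality}; dropping those factors leaves $N$ strictly positive contributions that overshoot in both bias and variance.
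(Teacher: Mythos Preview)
Your proposal is correct. Both parts reduce, as you say, to the deterministic inequalities $g(M)\le\sum_i g(n_i)$ and $h(M)\le\sum_i h(n_i)$, and your arguments for each are valid (the cross-multiplication $(P_i-1)(P_i n_i^2-1)\ge 0$ checks out, as does the sub-additivity $g(ab)\le g(a)+g(b)$ via the factorisation $(2b-1)(b-1)$ when $a=2$).

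The paper takes a shorter route for both parts. For the expectation, rather than reducing to the all-twos corner by a monotonicity argument, it simply bounds the left-hand side below by $\sum_i g(n_i)\ge \sum_i g(2)=N/3$ (using only that $g$ is increasing) and the right-hand side above by $g(M)<1$; this already finishes the case $N\ge 3$, with $N=1$ trivial and $N=2$ handled by the explicit factorisation $\frac{(n_1n_2-1)(n_1-1)(n_2-1)}{(n_1+1)(n_2+1)(n_1n_2+1)}\ge 0$. Your Route~A rediscovers the same $N/3$ versus $g(2^N)$ comparison at the corner but spends an extra derivative computation to get there; your Route~B via sub-additivity and induction is clean and arguably the most reusable of the three. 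For the variance, the paper does even less: it writes $\mathrm{Var}(Z_s^{sum})=N^2\,\mathrm{Var}(Z_s^{dec})$ and then invokes Theorem~\ref{theorem: main result} to get $\mathrm{Var}(Z_s^{sum})\ge \mathrm{Var}(Z_s^{dec})\ge \mathrm{Var}(Z_s^{dqn})$. Your direct argument from the monotonicity of $h$ is self-contained and avoids the dependence on Lemma~\ref{lemma: variance inequality}, which is a mild advantage if one wants the sum result to stand on its own.
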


    \textbf{Proof of Theorem \ref{theorem: sum result}}
        \begin{enumerate}
            \item We can see that $\mathbb{E}[Z_s^{sum}] = N \mathbb{E}[Z_s^{dec}] =  \gamma b\sum_{i=1}^N \frac{n_i - 1}{n_i + 1}$, so it remains to be shown that $\sum_{i=1}^N \frac{n_i - 1}{n_i + 1} \geq \frac{\left(\prod_{i=1}^N n_i\right) - 1}{\left(\prod_{i=1}^N n_i\right) + 1}$. First, let $f(x) = \frac{x - 1}{x + 1}$ and note that $\lim_{x \rightarrow \infty}f(x) = 1$ and that $f$ is increasing for $x \geq 2$. Since $f$ is increasing, and recalling that $n_i \geq 2$, for $N \geq 1$ we have
                \begin{align}
                    \sum_{i=1}^N f(n_i) &\geq \sum_{i=1}^N f(2) \;; \\
                    &= \sum_{i=1}^N \frac{1}{3} = \frac{N}{3} \; .
                \end{align}
            Now, $f\left( \prod_{i=1}^N n_i \right) \leq 1 \leq \frac{N}{3}$ for $N \geq 3$ and the case for $N = 1$ is trivial. For $N = 2$ we want to show that
                \begin{align}
                    \frac{n_1 n_2 - 1}{n_1 n_2 + 1} \leq \frac{n_1 - 1}{n_1 + 1} + \frac{n_2 - 1}{n_2 + 1} \; .
                \end{align}
            Re-arranging this term, we get
                \begin{align}
                    \frac{(n_1 n_2 - 1)(n_1 - 1)(n_2 - 1)}{(n_1 + 1)(n_2 + 1)(n_1 n_2 + 1)} \geq 0 \; ;
                \end{align}
            which is clearly true for $n_1, n_2 \geq 2$, and so the inequality also holds for $N=2$. 
            \item We can see that $\text{Var}(Z_s^{sum}) = N^2 \text{var}(Z_s^{dec}) = \gamma^2 \sum_{i=1}^N \frac{4b^2 n_i}{(n_i + 1)^2 (n_i + 2)}$. As we have shown that $\text{Var}(Z_s^{dec}) \geq \text{Var}(Z_s^{dqn})$ it follows immediately that $\text{Var}(Z_s^{sum}) \geq \text{Var}(Z_s^{dqn})$. 
        \end{enumerate}

    Theorem \ref{theorem: sum result} tells us that, when using function approximators, decomposing the Q-function using the sum of utilities leads to a higher bias in the target value compared to no decomposition. This is in contrast to the DecQN value-decomposition, which reduces the bias. However, both decompositions come at the cost of increased variance -- though we show as part of the proof that the sum decomposition also has a higher variance than the DecQN decomposition.

    Experimentally, we compare REValueD and DecQN with the sum value-decomposition (which we name DecQN-Sum) in Figure \ref{fig: decqn sum results}. We can see from the Figure that using the sum value-decomposition generally results in poorer performance than the mean. The poor performance of DecQN-Sum compared to DecQN and REValueD can likely be attributed to our findings in Theorem \ref{theorem: sum result}, that the sum value-decomposition increases the overestimation bias compared to using the mean as in DecQN. In Appendix \ref{sec: comparisons with ensembled baselines} we offer a further comparison of REValueD with DecQN-Sum equipped with an ensemble for a fairer comparison.

    \begin{figure}[H]
        \centering
        \includegraphics[width=\linewidth]{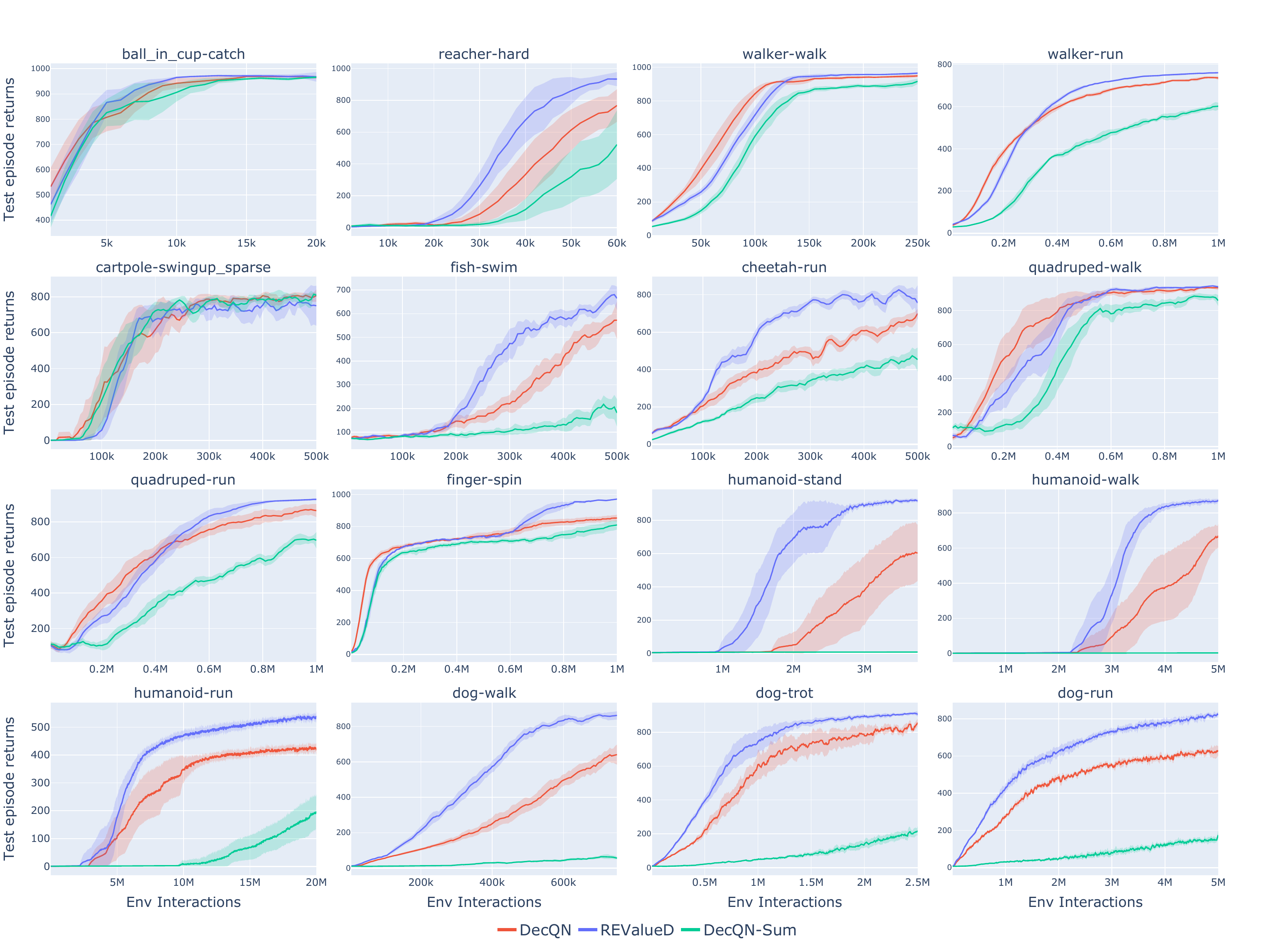}
        \caption{Further results comparing REValueD and DecQN to DecQN using the sum value-decomposition (DecQN-Sum). The solid line corresponds to the mean of 10 seeds, with the shaded area corresponding to a 95\% confidence interval.}
        \label{fig: decqn sum results}
    \end{figure}

\section{Experiment Details}

    \label{sec: experimentation details}

    \textbf{Hyperparameters:} We largely employ the same hyperparameters as the original DecQN study, as detailed in Table \ref{tab: hyperparameters}, along with those specific to REValueD. Exceptions include the decay of the exploration parameter ($\epsilon$) to a minimum value instead of keeping it constant, and the use of Polyak-averaging for updating the target network parameters, as opposed to a hard reset after every specified number of updates. We maintain the same hyperparameters across all our experiments. During action selection in REValueD, we follow a deep exploration technique similar to that proposed by \cite{osband2016deep}, where we sample a single critic from the ensemble at each time-step during training and follow an $\epsilon$-greedy policy based on that critic's utility estimates. For test-time action selection, we average over the ensemble and then act greedily according to the mean utility values.

    \textbf{Architecture and implementation details:} As in the original DecQN study, our architecture consists of a fully connected network featuring a residual block followed by layer normalisation. The output from this sequence is fed through a fully connected linear layer predicting value estimates for each sub-action space. In implementing the ensemble for REValueD, we follow the approach given by \cite{tarasov2022corl}, which has been demonstrated to have minor slowdowns for relatively small ensemble sizes \citep[Table 8]{beeson2024balancing}. 
    
    In Table \ref{tab: time data} we report the mean time taken for DecQN and REValueD, in minutes, when run on the same machine. Each task was run for 500k environment interactions (100k updates). We can see from the table REValueD takes around 25\% longer than DecQN, though it is important to note that REValueD is more sample efficient than DecQN so less environment interactions are needed to achieve a superior performance.
    
        \begin{table}[h!]
          \caption{Wall time comparison for DecQN and REValueD. Each task was ran for 500k environment interactions (100k updates), and the mean ($\pm$ variance) is reported in time (minutes) over 3 seeds. \label{tab: time data}}
          \centering
          \begin{tabular}{ccc}
            \hline
            Task & DecQN & REValueD \\
            \hline
            walker-walk & $30.47 \pm 0.00$ & $37.05 \pm 0.32$ \\
            dog-walk & $55.46 \pm 0.31$ & $69.55 \pm 0.32$ \\
            \hline
          \end{tabular}
        \end{table}
    
    To expedite data collection, we adopt an Ape-X-like framework \citep{horgan2018distributed}, with multiple distributed workers each interacting with individual instances of the environment. Their interactions populate a centralised replay buffer from which the learner samples transition data to learn from. It is crucial to note that all the algorithms we compare (REValueD, DecQN, BDQ) use the same implementation, ensuring a consistent number of environment interactions is used for each algorithm for fair comparison.

    \textbf{Environment details:} In Table \ref{tab: environment details} we show the state and sub-action space size of the DM Control Suite tasks we use in the experiments in Section \ref{sec: experiment section}. In particular, we can see that the humanoid and dog environments have large state and action spaces, each having 21 and 38 sub-action spaces, respectively. With 3 bins per sub-action space, this would correspond to $3^{21}$ and $3^{38}$ atomic actions. It is particularly challenging to solve these tasks given the interdependencies between sub-actions.

    \begin{table}[ht]
    \caption{Hyperparameters used for the experiments presented in Section \ref{sec: experiment section}.\label{tab: hyperparameters}}
        \centering
        \begin{tabular}{lcc}
            \toprule
            Algorithm & Parameters & Value \\
            \midrule
            \multirow{13}{*}{General} & Optimizer & Adam \\
            & Learning rate & $1 \times 10^{-4}$ \\
            & Replay size & $5 \times 10^{5}$ \\
            & n-step returns & 3 \\
            & Discount, $\gamma$ & 0.99 \\
            & Batch size & 256 \\
            & Hidden size & 512 \\
            & Gradient clipping & 40 \\
            & Target network update parameter, $c$ & 0.005 \\
            & Imp. sampling exponent & 0.2 \\
            & Priority exponent & 0.6 \\
            & Minimum exploration, $\epsilon$ & 0.05 \\
            & $\epsilon$ decay rate & 0.99995 \\
            \midrule
            \multirow{2}{*}{REValueD} & Regularisation loss coefficient $\beta$ & 0.5 \\
            & Ensemble size $K$ & 10 \\
            \bottomrule
        \end{tabular}
    \end{table}

    \begin{table}[ht]
    \caption{Details of the DM Control Suite environments used in the experiments presented in Section \ref{sec: experiment section}. Note that $N$ corresponds to the number of sub-action spaces in the FMDP. \label{tab: environment details}}
        \centering
        \begin{tabular}{lcc}
            \toprule
            Task & $|\mathcal{S}|$ & $N$ \\
            \midrule
            Cartpole Swingup Sparse & 5 & 1 \\
            Reacher Hard & 6 & 2 \\
            Ball in Cup Catch & 8 & 2 \\
            Finger Spin & 9 & 2 \\
            Fish Swim & 24 & 5 \\
            Cheetah Run & 17 & 6 \\
            Walker Walk/Run & 24 & 6 \\
            Quadruped Walk/Run & 78 & 12 \\
            Humanoid Stand/Walk/Run & 67 & 21 \\
            Dog Walk/Trot/Run & 223 & 38 \\
            \bottomrule
        \end{tabular}
    \end{table}

\section{Sensitivity to $\beta$}
    \label{sec: sensitivity to beta}
    Whilst we did not tune $\beta$ for the results presented in Section \ref{sec: main results}, here we analyse how sensitive REValueD is to the value of $\beta$. Recall that in Equation \eqref{eq: revalued loss} $\beta$ controls how much the regularisation loss contributes to the overall loss. In Table \ref{tab: asymptotic beta sens results} we present asymptotic results for various DM Control Suite tasks using a varying $\beta$ value. Note that we performed the same number of updates per task as in Figure \ref{fig: main results}. 
    
    We can see that REValueD is reasonably robust to $\beta$, typically with the best performing $\beta$ value lying somewhere between 0.5 and 0.75. In the finger-spin task, we see that a $\beta$ value of 0 offers the best performance. Intuitively this would make sense, since there are only two sub-action spaces it is less likely that credit assignment would be an issue. Similarly in walker-run we see that there is not much difference in performance for the tested $\beta$ values. Much like finger-spin, this can be attributed to the fact that walker-run is a reasonably easy task and has only 6 sub-action spaces. Consequently, credit assignment may not have a major impact on the performance. In the dog-walk task we see that a $\beta$ value of 0 performs significantly worse, with the best performing $\beta$ values being 0.5 and 0.75. This further demonstrates the benefits of the regularisation loss, particularly in tasks with large $N$, whilst also showing the robustness to the value of $\beta$. 
    

    \begin{table}[t!]
    \caption{Asymptotic results for REValueD with varying $\beta$ across various DM Control Suite tasks. We report the mean $\pm$ standard error over 10 seeds.}
        \centering
        \begin{center}
        \centerline{
            \begin{tabular}{lccccc}
                \toprule
                \multirow{2}{*}{Task} & \multicolumn{5}{c}{$\beta$} \\
                \cmidrule{2-6}
                 & 0 & 0.25 & 0.5 & 0.75 & 1 \\
                \midrule
                Finger-Spin   & $\textbf{947.80} \boldsymbol{\pm} \textbf{10.6}$ & $847.15 \pm 20.3$ & $905.86 \pm 16.1$ & $849.26 \pm 20.3$ & $927.39 \pm 25.9$ \\
                Walker-Run    & $755.72 \pm 3.25$ & $750.91 \pm 2.95$ & $761.74 \pm 2.30$ & $\textbf{768.60} \boldsymbol{\pm} \textbf{1.62}$ & $758.87 \pm 3.66$ \\
                Cheetah-Run   & $756.71 \pm 20.1$ & $802.80 \pm 18.3$ & $\textbf{828.60} \boldsymbol{\pm} \textbf{25.9}$ & $814.62 \pm 19.5$ & $792.52 \pm 16.9$ \\
                Quadruped-Run & $885.91 \pm 5.88$ & $910.16 \pm 8.86$ & $924.52 \pm 3.92$ & $922.57 \pm 12.3$ & $\textbf{927.21} \boldsymbol{\pm} \textbf{5.35}$ \\
                Dog-Walk      & $838.81 \pm 14.3$ & $870.75 \pm 9.99$ & $886.73 \pm 7.38$ & $\textbf{895.54} \boldsymbol{\pm} \textbf{9.94}$ & $880.50 \pm 9.96$ \\
                \bottomrule
            \end{tabular}
        }
        \end{center}
        \label{tab: asymptotic beta sens results}
    \end{table}

\section{Further results for varying $n_i$}
    \label{sec: further bin size plots}
    In this Section we present further results for varying $n_i$ in more DM control suite tasks. The results can be found in Figure \ref{fig: appendix bin size plots}. We observe that as $n_i$ increases, REValueD continues to exhibit better performance compared to DecQN and BDQ. However, it's worth noting that in the cheetah-run task, as $n_i$ surpasses 75, the performance difference becomes less pronounced. 
    \begin{figure}
        \centering
        \includegraphics[width=\linewidth]{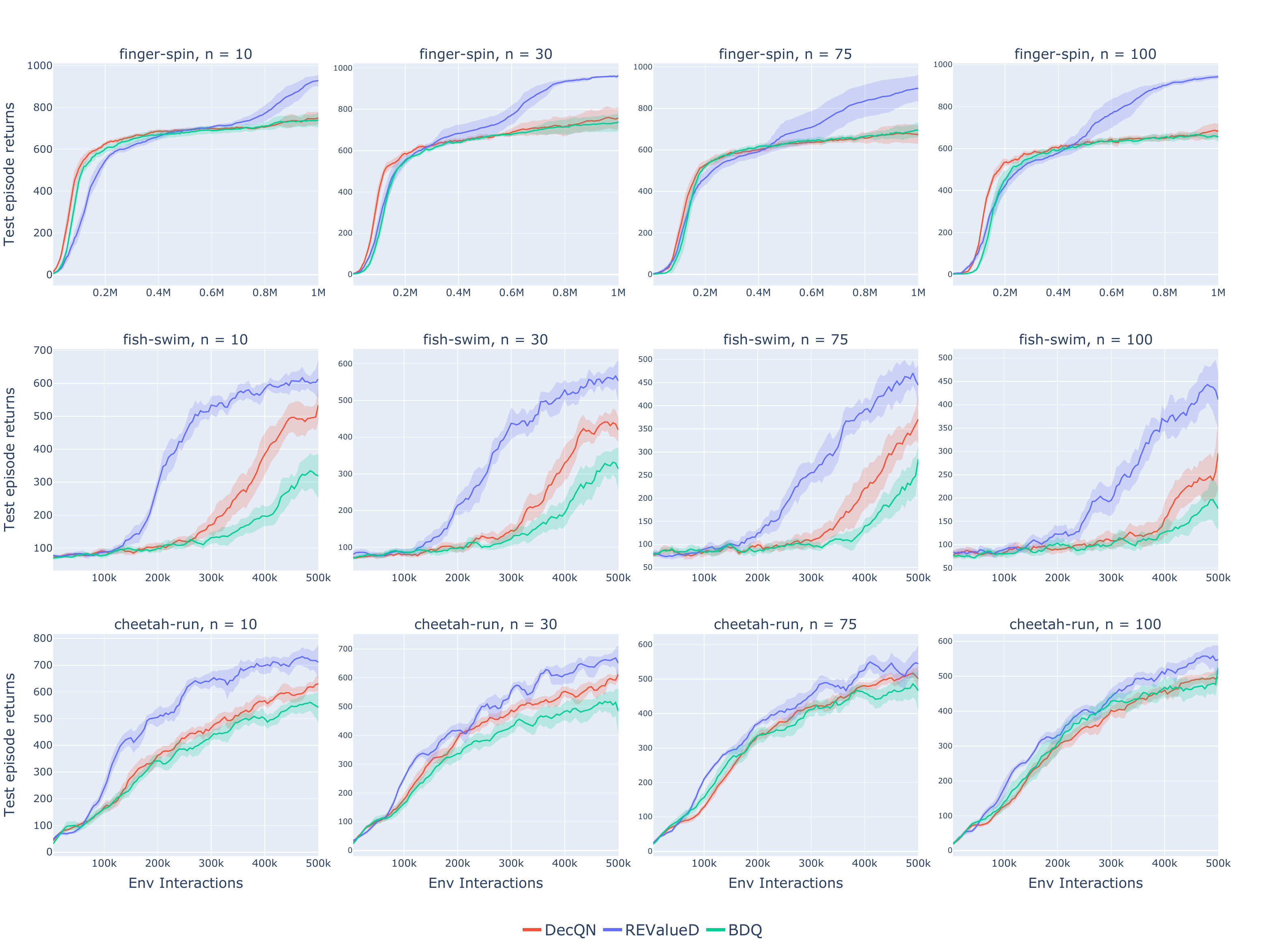}
        \caption{Further results assessing how the performance of DecQN and REValueD are effected by increasing the size of each sub-actions space. $n$ corresponds to the size of the sub-action space, \textit{i.e.} $|\mathcal{A}_i| = n$ for all $i$. The solid line corresponds to the mean of 10 seeds, with the shaded area corresponding to a 95\% confidence interval.}
        \label{fig: appendix bin size plots}
    \end{figure}

\section{Further results in stochastic environments}
    \label{sec: further results in stochastic environments}
    In Figure \ref{fig: appendix stochastic envs} we present more comparisons for stochastic environments. We see that the results remain similar to those observed in Section \ref{sec: ablations}, with the exception of the stochastic state variant of Quadruped-Run, where there is not a significant difference between REValueD and DecQN. 

    \begin{figure}
        \centering
        \includegraphics[width=\linewidth]{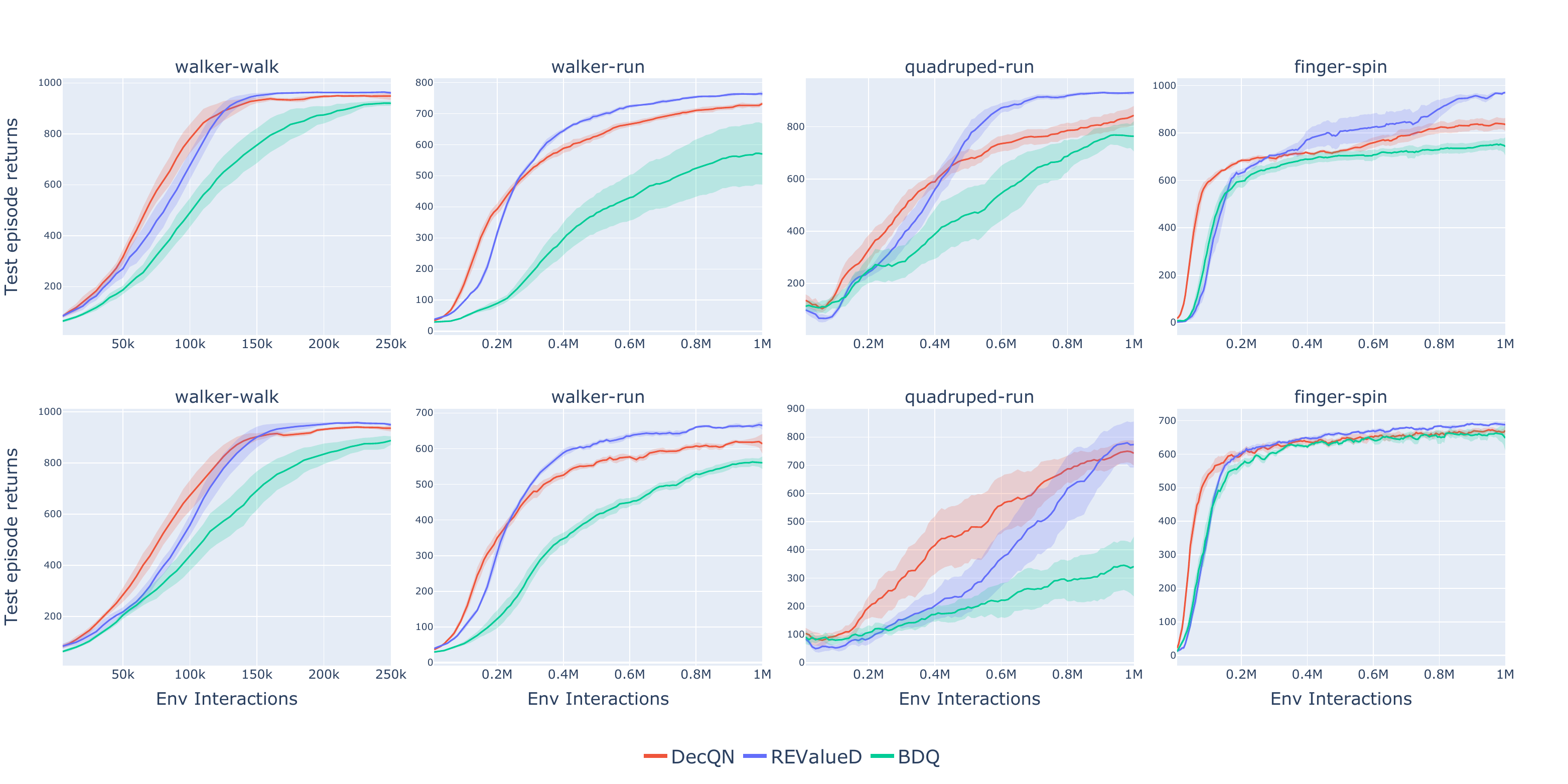}
        \caption{Stochastic environment tasks. In the top row we added Gaussian white noise $(\sigma = 0.1)$ to the rewards, whilst in the bottom row we added Gaussian white noise to the state. \label{fig: appendix stochastic envs}}
    \end{figure}

\section{Training stability}
    \label{sec: training stability}
   To investigate how this variance reduction influences the stability of the training process, we present in Figure \ref{CVarPlot} the Conditional Value at Risk (CVaR) for the detrended gradient norms, as suggested by \cite{chan2019measuring}. The CVaR provides information about the risk in the tail of a distribution and is expressed as
        \begin{equation}
            \label{eq: cvar definition}
            \mbox{CVaR}(g) = \mathbb{E} \left[g | g \geq \mbox{VaR}_{95\%}(g) \right] \; .
        \end{equation}
    Here, $g$ stands for the detrended gradient norm and VaR (Value at Risk) represents the value at the 95th percentile of all detrended gradient norm values. We perform detrending of the gradient norm by calculating the difference in the gradient norm between two consecutive time steps, that is, $g_{t+1} = |\nabla_{t+1}| - |\nabla_t|$, where $\nabla$ denotes the gradient. As shown in the Figure, employing the average of the ensemble for the learning target in REValueD considerably lowers the CVaR for the tasks under consideration.
    
    \begin{figure}[h!]
        \centering
        \includegraphics[width=\linewidth]{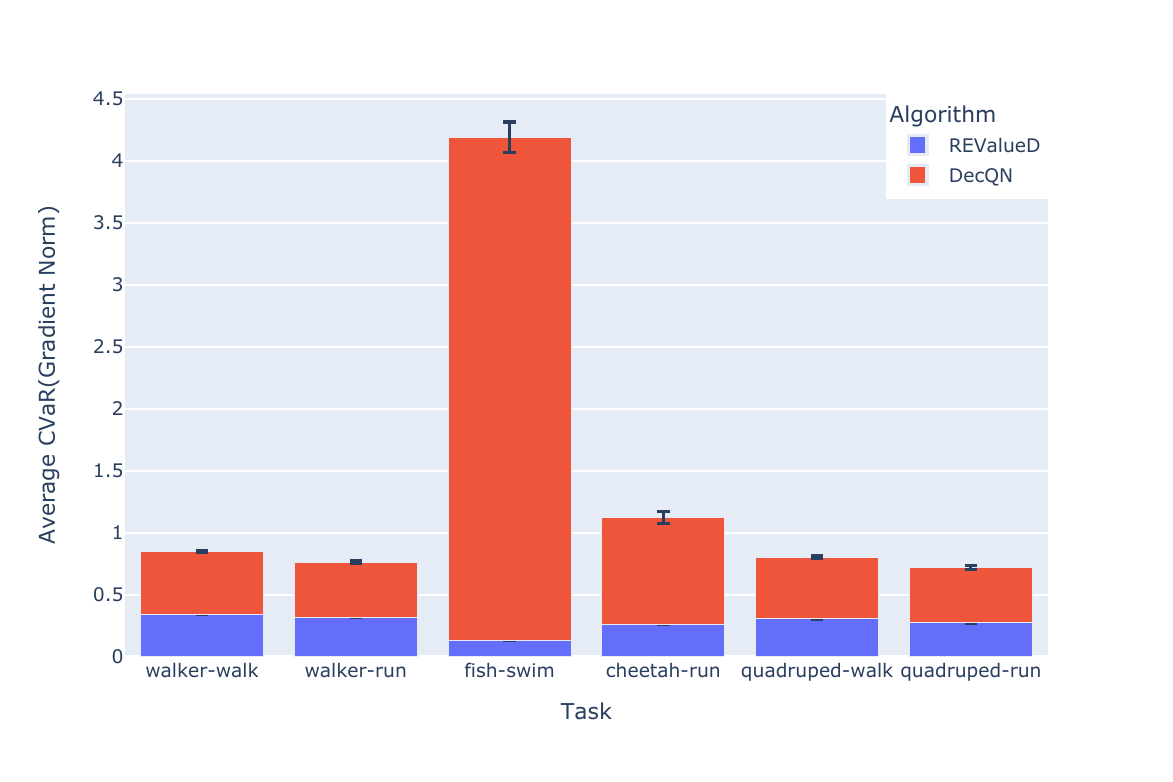}
        \caption{Conditional Value at Risk (CVaR) for the gradient norms (which have been detrended) for both DecQN and REValueD, evaluated on six tasks from the DeepMind Control Suite. The CVaR values presented here are the average across ten separate runs, and the figure includes both the mean values as well as error bars representing the standard error for each.}  \label{CVarPlot}
    \end{figure}

\section{Contribution of the regularisation loss in a tabular FMDP}
    \label{sec: regularisation loss tabular FMDP}
    In this Section we design a simple FMDP encompassing $N$ sub-action spaces, each containing $n$ sub-actions. The FMDP includes one non-terminal state, with all actions resulting in a transition to the same terminal state. We have assumed that each sub-action space has a single optimal sub-action. A reward of $+1$ is obtained when all optimal sub-actions are selected simultaneously, whilst a reward of $-1$ is given otherwise. 
                
    For these experiments, we randomly initialise the utility values for every sub-action across all sub-action spaces using a Uniform($-0.1, 0.1$) distribution, with the exception of the optimal sub-action in the $N$th sub-action space, which is initialised with a value of $+1$. Actions are then chosen following an $\epsilon$-greedy policy ($\epsilon = 0.1$). After this, an update is performed on the transition data and we record the frequency at which the optimal sub-action in the final sub-action space has been taken. Since the utility values have been randomly initialised, the probability of jointly selecting the first $N-1$ optimal sub-actions is approximately $\frac{1}{n^{N-1}}$, which means that most of the transitions are likely to result in a reward of $-1$. Under the DecQN loss, performing an update on a transition with a reward of $-1$ will result in the value of the optimal sub-action in the $N$th sub-action space being decreased, despite it being initialised at an optimal value. We aim to show that using the regularisation loss from Equation \eqref{eq: regularisation loss} mitigates the effect that the transition has on the value of the optimal sub-action. 
            
    Figure \ref{fig: tabular updates} shows the mean frequency of optimal $N$th sub-action selection for both REValueD and DecQN. The mean is averaged over 1000 runs, and the shaded region represents a $95\%$ confidence interval. A clear observation from this Figure is that, as both $N$ and $n$ increase, REValueD starts to outperform DecQN. This result underscores the efficacy of the additional regularisation loss in reducing the impact of the sub-optimal sub-actions from the first $N-1$ dimensions on the value of the optimal sub-action in the $N$th dimension.

    \begin{figure}[h!]
        \centering
        \includegraphics[width=\linewidth]{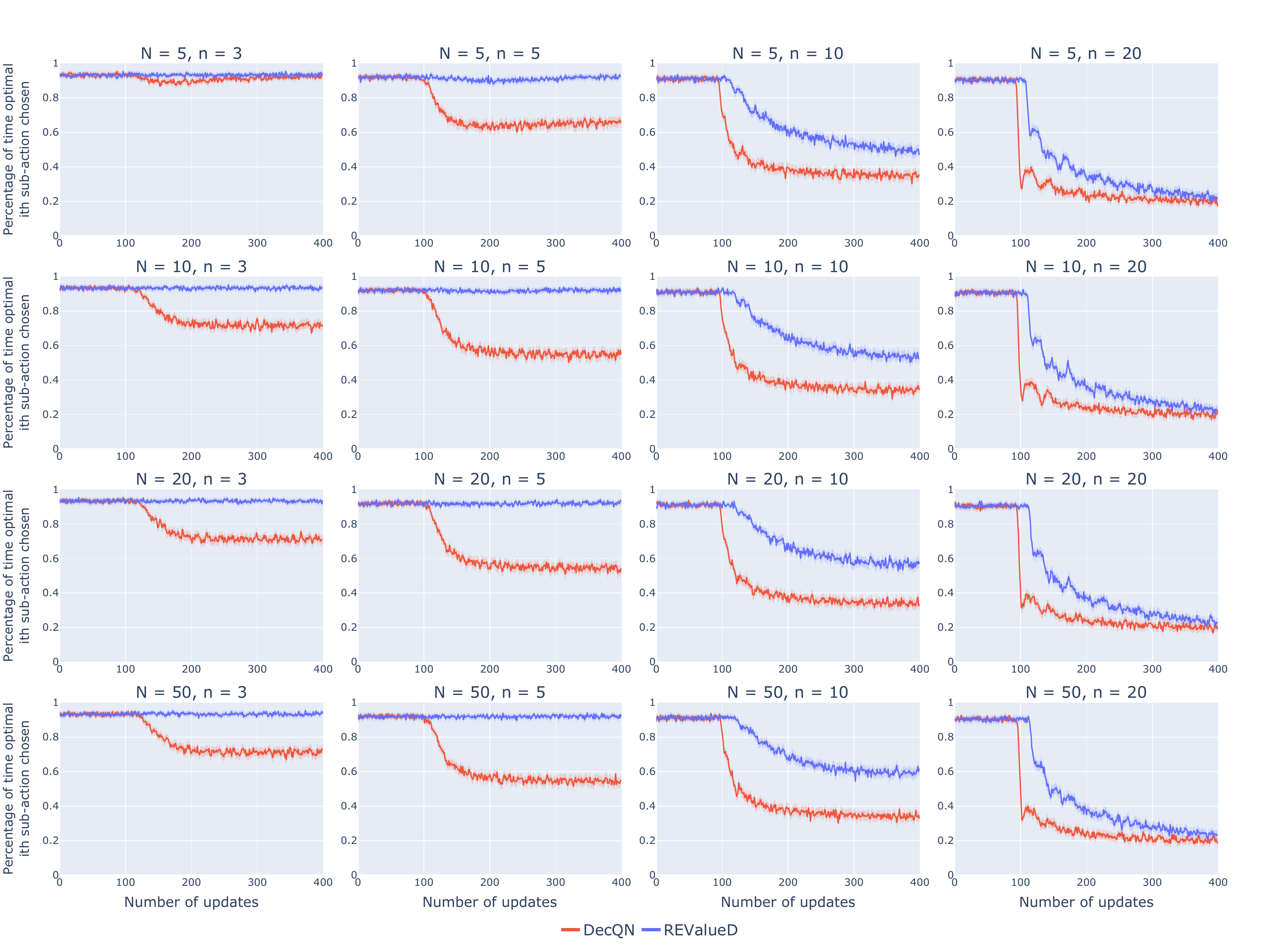}
        \caption{This figure presents the outcomes from the tabular FMDP experiment (see Section \ref{sec: ablations}). The $x$-axis represents the number of model updates performed, whilst the $y$-axis signifies the frequency with which the optimal $N$th sub-action was selected, expressed as a percentage. The solid line corresponds to the mean performance over 1000 trials, and the shaded region corresponds to a $95\%$ confidence interval.}
        \label{fig: tabular updates}
    \end{figure}

\section{Functional form of the regularisation weights}
    \label{sec: weight function ablation}
    In Equation \eqref{eq: regularisation loss} we define the weighting as $w_i = 1 - \exp(-|\delta_i|)$. The motivation for this form was that, for large differences, we want the weight to tend to 1, and for small differences be close to 0. In this Section we look to assess how the performance of REValueD is affected when using a different function to define the weights that also satisfies this property. In Figure \ref{fig: function plots} we compare to a quadratic function of the form $w_i = \mbox{min}(\delta_i^2, 1)$. The results can be found in Figure \ref{fig: function comparison}. We can see that generally speaking, the exponential weighting function offers a better performance in terms of sample efficiency, although both reach roughly the same asymptotic performance. The explanation for this can likely be attributed to the functional forms, demonstrated in Figure \ref{fig: function plots}, where the quadratic function will return a weight of 1 for any $|\delta_i| \geq 1$. This leads to regularising the utility functions too harshly. It is possible that a temperature parameter $c$ could be incorporated to scale $\delta_i$, such that we now have $\delta_i' = \delta_i / c$ as the input to the weighting function, but an analysis of this is beyond the scope of this paper.

    \begin{figure}[H]
        \centering
        \includegraphics[width=\linewidth]{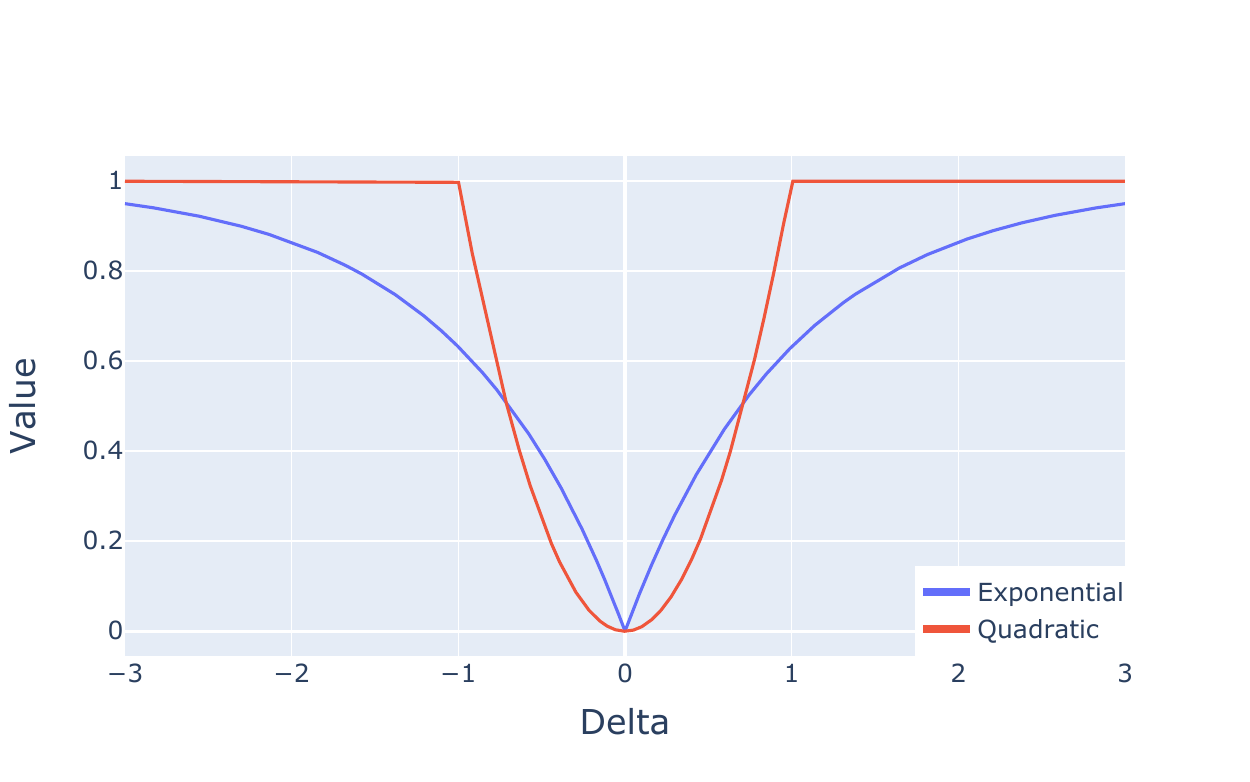}
        \caption{Here we show the plots of the exponential weight function and the quadratic weight function.}
        \label{fig: function plots}
    \end{figure}

    \begin{figure}[H]
        \centering
        \includegraphics[width=\linewidth]{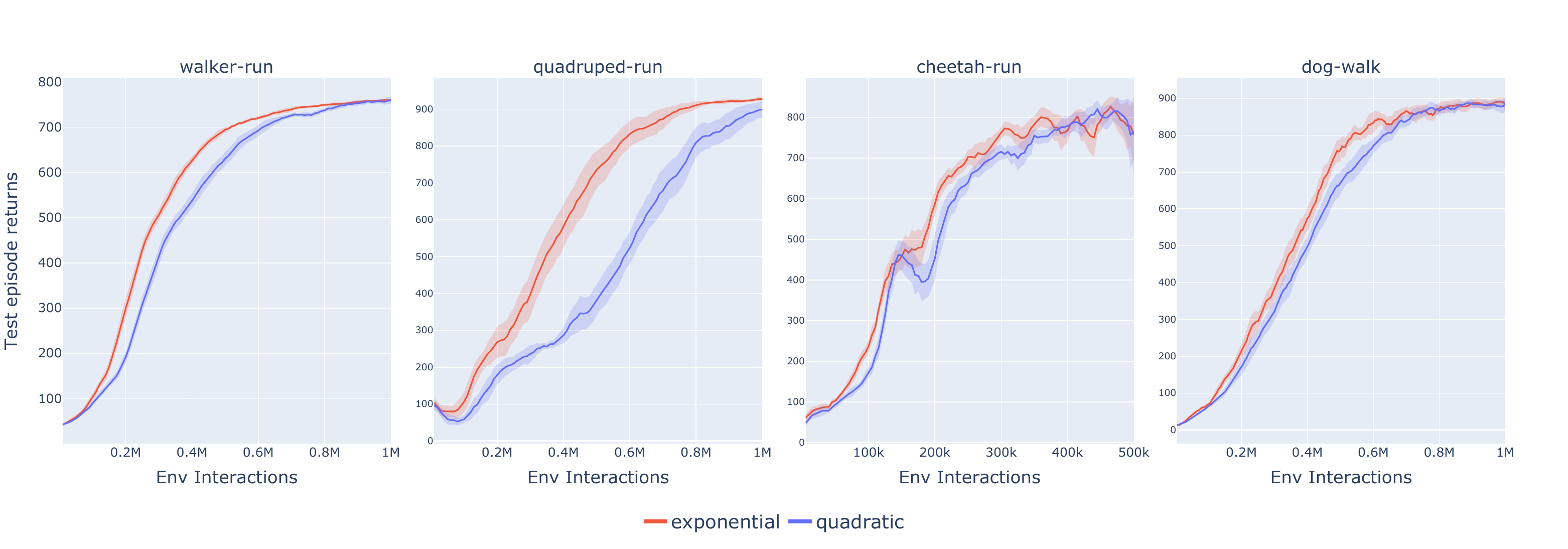}
        \caption{This Figure compares the performance of REValueD using the definition of the weights from Equation \eqref{eq: regularisation loss} (exponential) vs. a quadratic function. The solid line corresponds to the mean performance over 10 seeds, and the shaded region corresponds to a $95\%$ confidence interval.}
        \label{fig: function comparison}
    \end{figure}

\section{Metaworld}
    \label{sec: metaworld}
    In this Section we offer further comparisons between DecQN and REValueD in some manipulation tasks from MetaWorld \citep{yu2020meta}. The results can be found in Figure \ref{fig: metaword results}. We can see that in these tasks REValueD achieves stronger asymptotic performance than DecQN, whilst also generally having smaller variance. This demonstrates the efficacy in domains other than locomotive tasks. 

    \begin{figure}
        \centering
        \includegraphics[width=\linewidth]{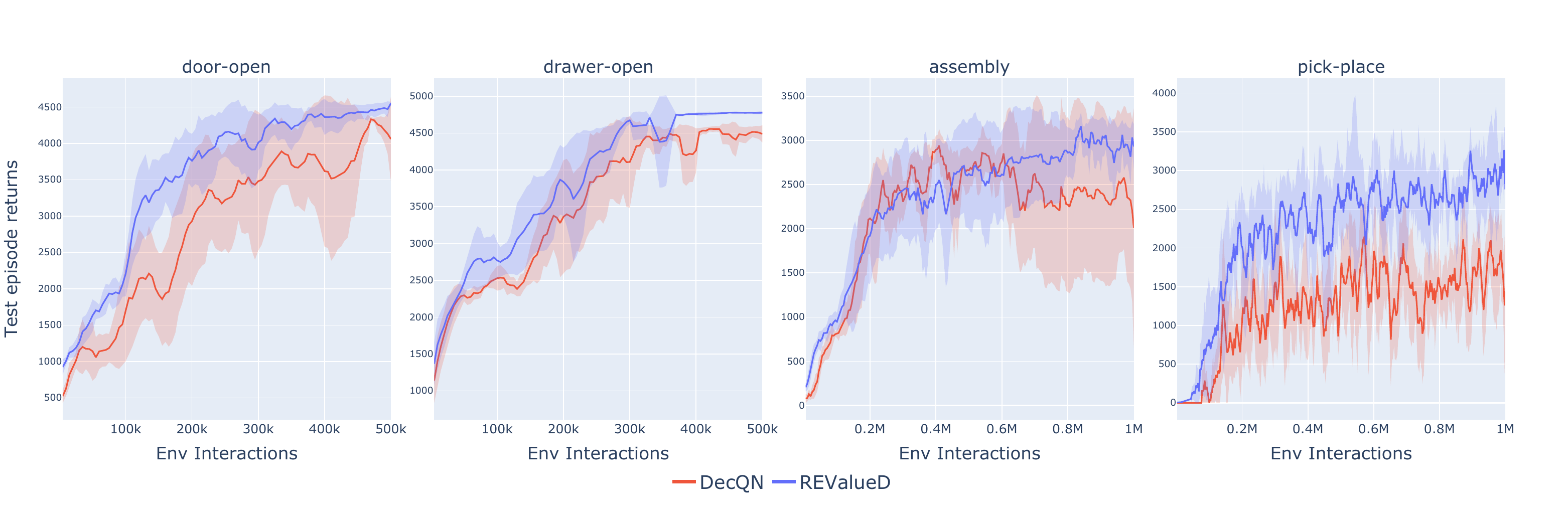}
        \caption{Here we compare the performance of DecQN and REValueD for discretised variants of manipulation tasks from MetaWorld. The solid line corresponds to the mean of 10 seeds, with the shaded area corresponding to a 95\% confidence interval.}
        \label{fig: metaword results}
    \end{figure}

\section{Further comparisons with ensembled baselines}
\label{sec: comparisons with ensembled baselines}
    For a more like-to-like comparison between REValueD and BDQ/DecQN-Sum, we have compared REValueD with variants of these baselines equipped with an ensemble -- note that we have compared REValueD with DecQN+Ensemble in Table \ref{tab: ensemble comparison}. The results for select environments can be found in Figure \ref{fig: bdq ensemble bin size results}. We can see that the results for BDQ remain largely unchanged, with the exception of finger-spin with a bin size of 10, where BDQ+Ensemble now attains a similar performance to that of REValueD. For DecQN-Sum+Ensemble we note that the performance in finger-spin is improved by equipping an ensemble, with the performance achieving similar levels to REValueD for $n = 30, 75, 100$. However, for the other tasks the performance is still largely unchanged. 


    \begin{figure}[h!]
        \centering
        \includegraphics[width=\linewidth]{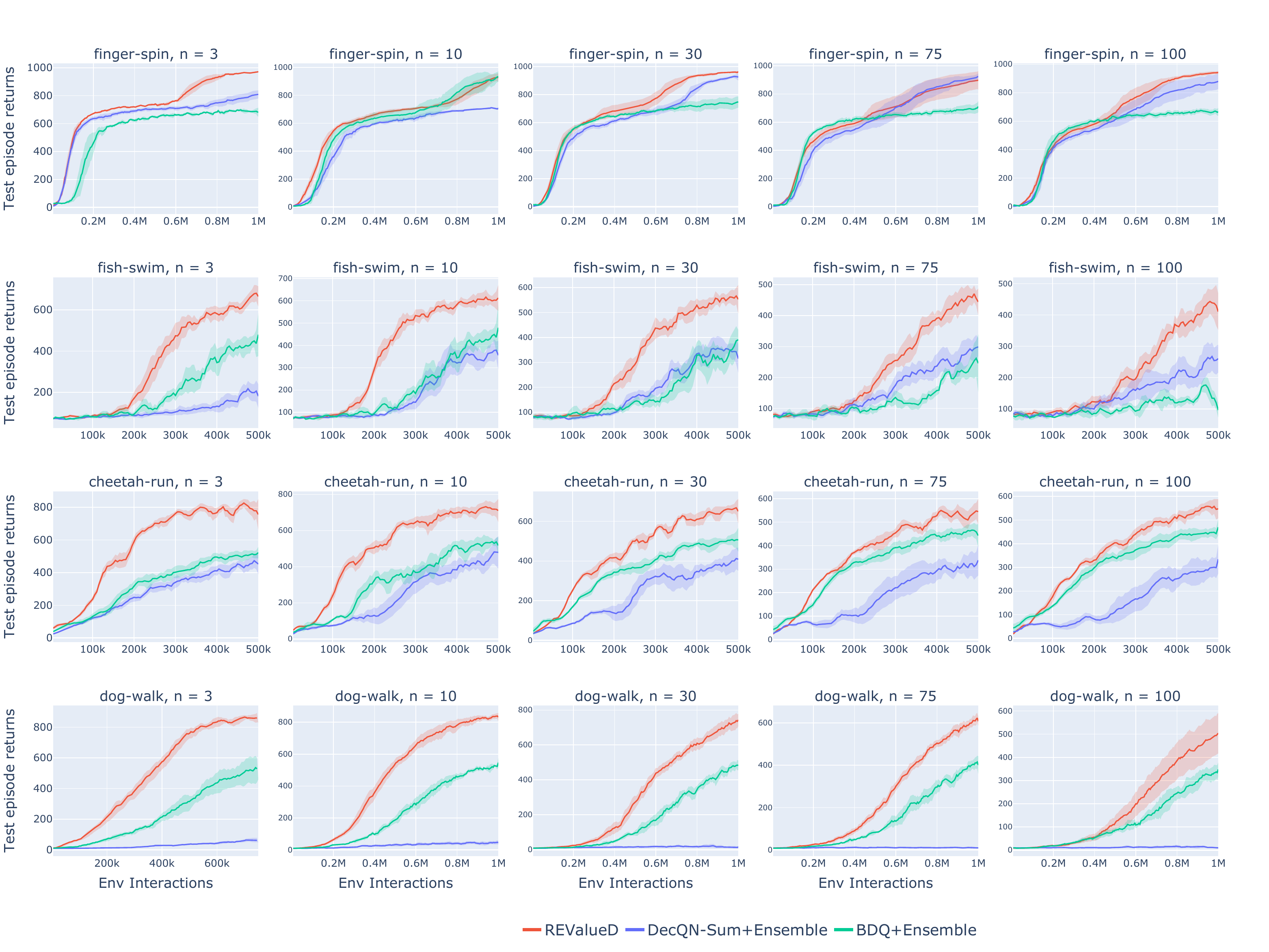}
        \caption{Here we compare the performance of REValueD with BDQ+Ensemble and Sum-DecQN+Ensemble on discretised variants of the DM control suite tasks with varying bin sizes. The solid line corresponds to the mean of 10 seeds, with the shaded area corresponding to a 95\% confidence interval.}
        \label{fig: bdq ensemble bin size results}
    \end{figure}

\section{Distributional Critics}
    \label{sec: distributional critics}
    As noted in Section \ref{sec: conclusion}, a distributional perspective to learning can help deal with uncertainty induced by exploratory sub-actions. To that end, similar to the work in Appendix I of \cite{seyde2022solving}, we compare our method, REValueD, to a variant of DecQN which uses a distributional critic based on C51, introduced by \cite{bellemare2017distributional}. Rather than learning an expected utility value for each sub-action, the critic now learns a distribution over values for each sub-action. The decomposition now proceeds at the probability level by averaging over logit values $\boldsymbol{\ell} = \sum_{i=1}^N \boldsymbol{\ell}_i / N$. 

    We observe the results of the comparison between DecQN-C51 and REValueD in Figure \ref{fig: c51 results}, with comparisons being conducted in selected DM Control Suite environments and their stochastic variants. We observe that, even when equipped with a distributional critic, REValueD maintains a superior performance in all of the tasks. 

    \begin{figure}
        \centering
        \includegraphics[width=\linewidth]{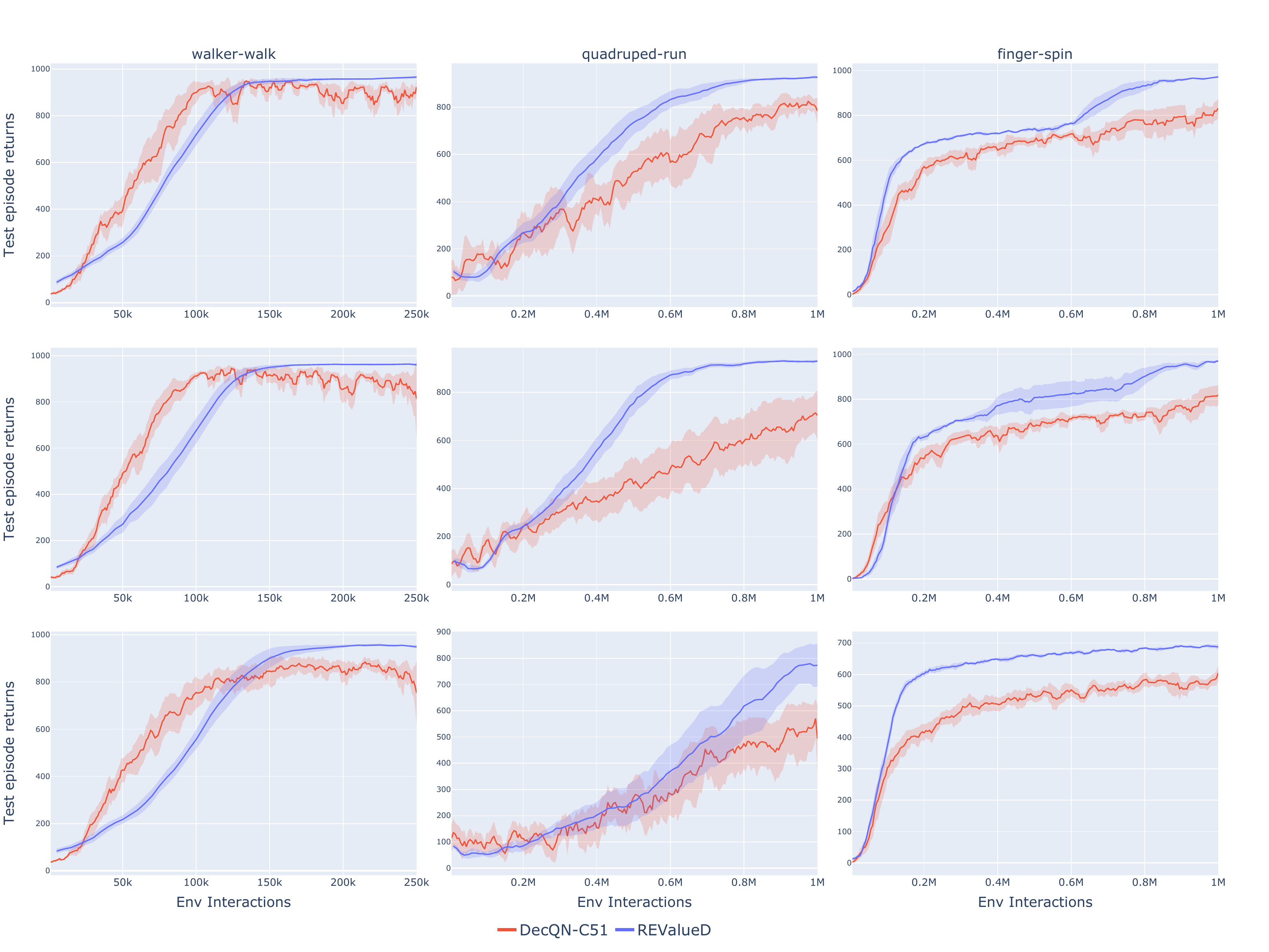}
        \caption{Comparison of DecQN-C51 and REValueD. The top row are non-stochastic DM control suite tasks; whilst the middle and bottom rows correspond to the same environments with Gaussian white noise ($\sigma = 0.1$) added to the rewards and states, respectively. The solid line corresponds to the mean of 10 seeds, with the shaded area corresponding to a 95\% confidence interval.}
        \label{fig: c51 results}
    \end{figure}

\end{document}